\def\eqref#1{equation~\ref{#1}}
\def\ceil#1{\lceil #1 \rceil}
\def\1{\bm{1}}
\def\ve{{\bm{e}}}
\def\vk{{\bm{k}}}
\def\vv{{\bm{v}}}
\def\vw{{\bm{w}}}
\def\vx{{\bm{x}}}
\def\vy{{\bm{y}}}
\def\vz{{\bm{z}}}
\def\mA{{\bm{A}}}
\def\mB{{\bm{B}}}
\def\mC{{\bm{C}}}
\def\mD{{\bm{D}}}
\def\mI{{\bm{I}}}
\def\mK{{\bm{K}}}
\def\mL{{\bm{L}}}
\def\mM{{\bm{M}}}
\def\mP{{\bm{P}}}
\def\mW{{\bm{W}}}
\def\mX{{\bm{X}}}
\def\mY{{\bm{Y}}}
\def\mZ{{\bm{Z}}}
\DeclareMathAlphabet{\mathsfit}{\encodingdefault}{\sfdefault}{m}{sl}
\SetMathAlphabet{\mathsfit}{bold}{\encodingdefault}{\sfdefault}{bx}{n}
\newcommand{\Cov}{\mathrm{Cov}}
\DeclareMathOperator*{\argmin}{arg\,min}
\DeclareMathOperator{\Tr}{Tr}
\theoremstyle{plain}
\newtheorem{theorem}{Theorem}[section]
\newtheorem{proposition}[theorem]{Proposition}
\newtheorem{lemma}[theorem]{Lemma}
\theoremstyle{definition}
\theoremstyle{remark}
\title{Joint Embedding Self-Supervised Learning in the Kernel Regime}
\author{%
  Bobak T. Kiani\\
% \thanks{correspondence to \texttt{bkiani@mit.edu}} \\
  MIT \&
  Meta AI, FAIR \\
\texttt{bkiani@mit.edu}
   \And
  Randall Balestriero \\
  Meta AI, FAIR \\
\texttt{rbalestriero@fb.com}
  \And
  Yubei Chen \\
  Meta AI, FAIR\\
\texttt{yubeic@fb.com}
  \AND
  Seth Lloyd \\
  MIT \&
  Turing Inc. \\
  \texttt{slloyd@mit.edu}
   \And
  Yann LeCun \\
  NYU \&
  Meta AI, FAIR \\
\texttt{yann@fb.com}
}
\begin{document}

\maketitle
%TLDR: We analyze and derive self-supervised learning algorithms using kernel methods
\begin{abstract}
The fundamental goal of self-supervised learning (SSL) is to produce useful representations of data without access to any labels for classifying the data. Modern methods in SSL, which form representations based on known or constructed relationships between samples, have been particularly effective at this task. Here, we aim to extend this framework to incorporate algorithms based on kernel methods where embeddings are constructed by linear maps acting on the feature space of a kernel. In this kernel regime, we derive methods to find the optimal form of the output representations for contrastive and non-contrastive loss functions. This procedure produces a new representation space with an inner product denoted as the induced kernel which generally correlates points which are related by an augmentation in kernel space and de-correlates points otherwise. We analyze our kernel model on small datasets to identify common features of self-supervised learning algorithms and gain theoretical insights into their performance on downstream tasks.
\end{abstract}

\section{Introduction}
Self-supervised learning (SSL) algorithms are broadly tasked with learning from unlabeled data. In the joint embedding framework of SSL, mainstream contrastive methods build representations by reducing the distance between inputs related by an augmentation (positive pairs) and increasing the distance between inputs not known to be related (negative pairs) \citep{chen2020simple,he2020momentum,oord2018representation,ye2019unsupervised}. Non-contrastive methods only enforce similarities between positive pairs but are designed carefully to avoid collapse of representations \citep{grill2020bootstrap,zbontar2021barlow}. Recent algorithms for SSL have performed remarkably well reaching similar performance to baseline supervised learning algorithms on many downstream tasks \citep{caron2020unsupervised,bardes2021vicreg,chen2021exploring}. 

In this work, we study SSL from a kernel perspective. In standard SSL tasks, inputs are fed into a neural network and mapped into a feature space which encodes the final representations used in downstream tasks (e.g., in classification tasks). In the kernel setting, inputs are embedded in a feature space corresponding to a kernel, and representations are constructed via an optimal mapping from this feature space to the vector space for the representations of the data. Since the feature space of a kernel can be infinite dimensional, one practically may only have access to the kernel function itself. Here, the task can be framed as one of finding an optimal ``induced" kernel, which is a mapping from the original kernel in the input feature space to an updated kernel function acting on the vector space of the representations. Our results show that such an induced kernel can be constructed using only manipulations of kernel functions and data that encodes the relationships between inputs in an SSL algorithm (e.g., adjacency matrices between the input datapoints).

More broadly, we make the following contributions:
\begin{itemize}
    \item For a contrastive and non-contrastive loss, we provide closed form solutions when the algorithm is trained over a single batch of data. These solutions form a new ``induced" kernel which can be used to perform downstream supervised learning tasks.
    \item We show that a version of the representer theorem in kernel methods can be used to formulate kernelized SSL tasks as optimization problems. As an example, we show how to optimially find induced kernels when the loss is enforced over separate batches of data.
    \item We empirically study the properties of our SSL kernel algorithms to gain insights about the training of SSL algorithms in practice. We study the generalization properties of SSL algorithms and show that the choice of augmentation and adjacency matrices encoding relationships between the datapoints are crucial to performance.
\end{itemize}

We proceed as follows. First, we provide a brief background of the goals of our work and the theoretical tools used in our study. Second, we show that kernelized SSL algorithms trained on a single batch admit a closed form solution for commonly used contrastive and non-contrastive loss functions. Third, we generalize our findings to provide a semi-definite programming formulation to solve for the optimal induced kernel in more general settings and provide heuristics to better understand the form and properties of the induced kernels. Finally, we empirically investigate our kernelized SSL algorithms when trained on various datasets. %(code included in supplemental material). 

\subsection{Notation and setup}
We denote vectors and matrices with lowercase ($\vx$) and uppercase ($\mX$) letters respectively. The vector 2-norm and matrix operator norm is denoted by $\| \cdot \|$. The Frobenius norm of a matrix $\mM$ is denoted as $\| \mM \|_F$. We denote the transpose and conjugate transpose of $\mM$ by $\mM^\intercal$ and $\mM^\dagger$ respectively. We denote the identity matrix as $\mI$ and the vector with each entry equal to one as $\bm 1$. For a diagonalizable matrix $\mM$, its projection onto the eigenspace of its positive eigenvalues is $\mM_+$. 

For a dataset of size $N$, let $\vx_i \in \mathcal{X}$ for $i \in [N]$ denote the elements of the dataset. Given a kernel function $k:\mathcal{X} \times \mathcal{X} \to \mathbb{R}$, let $\Phi(\vx) = k(\vx,\cdot)$ be the map from inputs to the reproducing kernel Hilbert space (RKHS) denoted by $\mathcal{H}$ with corresponding inner product $\langle \cdot, \cdot \rangle_{\mathcal{H}}$ and RKHS norm $\|\cdot\|_{\mathcal{H}}$. Throughout we denote $\mK_{s,s} \in \mathbb{R}^{N \times N}$ to be the kernel matrix of the SSL dataset where $(\mK_{s,s})_{ij} = k(\vx_i, \vx_j)$. We consider linear models $\mW: \mathcal{H} \to \mathbb{R}^{K}$ which map features to representations $\vz_i = \mW \Phi( \vx_i)$. Let $\mZ$ be the representation matrix which contains $\Phi(\vx_i)$ as rows of the matrix. This linear function space induces a corresponding RKHS norm which can be calculated as $\|\mW\|_{\mathcal{H}} = \sqrt{ \sum_{i=1}^K \langle \mW_i, \mW_i \rangle_{\mathcal{H}}^2}$ where $\mW_i \in \mathcal{H}$ denotes the $i$-th component of the output of the linear mapping $\mW$. This linear mapping constructs an ``induced" kernel denoted as $k^*(\cdot,\cdot)$ as discussed later. 

The driving motive behind modern self-supervised algorithms is to maximize the information of given inputs in a dataset while enforcing similarity between inputs that are known to be related. The adjacency matrix $\mA \in \{0,1\}^{N \times N}$ (also can be generalized to $\mA \in \mathbb{R}^{N \times N}$) connects related inputs $\vx_i$ and $\vx_j$ (i.e., $\mA_{ij}=1$ if inputs $i$ and $j$ are related by a transformation) and $\mD_{\mA}$ is a diagonal matrix where entry $i$ on the diagonal is equal to the number of nonzero elements of row $i$ of $\mA$.

\section{Related works}

Any joint-embedding SSL algorithm requires a properly chosen loss function and access to a set of observations and known pairwise positive relation between those observations. Methods are denoted as non-contrastive if the loss function is only a function of pairs that are related \citep{grill2020bootstrap,chen2021exploring,zbontar2021barlow}. One common method using the VICReg loss \citep{bardes2021vicreg}, for example, takes the form 
\begin{align}
\mathcal{L}_{\rm vic}\hspace{-0.05cm}=&\alpha\hspace{-0.05cm} \sum_{k=1}^{K}\max\hspace{-0.05cm}\left(0,1\hspace{-0.05cm}-\hspace{-0.05cm}\sqrt{\Cov(\mZ)_{k,k}}\right)\hspace{-0.08cm}+\hspace{-0.05cm} \beta\hspace{-0.1cm} \sum_{j=1,j\not = k}^{K}\hspace{-0.15cm}\Cov(\mZ)^2_{k,j}\hspace{-0.05cm}+\hspace{-0.05cm}\frac{\gamma}{N} \sum_{i=1}^{N}\sum_{j=1}^{N}(\mA)_{i,j}\|\mZ_{i,.}-\mZ_{j,.}\|_2^2.
\label{eq:VICReg}
\end{align}
We adapt the above for the non-contrastive loss we study in our work. Contrastive methods also penalize similarities of representations that are not related. Popular algorithms include SimCLR \citep{chen2020simple}, SwAV \citep{caron2020unsupervised}, NNCLR \citep{dwibedi2021little}, contrastive predictive coding \citep{oord2018representation}, and many others. We consider a variant of the spectral contrastive loss in our work \citep{haochen2021provable}.

% \paragraph{Theoretical studies of SSL} 
\textbf{Theoretical studies of SSL:}
In tandem with the success of SSL in deep learning, a host of theoretical tools have been developed to help understand how SSL algorithms learn  \citep{arora2019theoretical,balestriero2022contrastive,haochen2022beyond,lee2021predicting}. Findings are often connected to the underlying graph connecting the data distribution \citep{wei2020theoretical,haochen2021provable} or the choice of augmentation \citep{wen2021toward}. Building representations from known relationships between datapoints is also studied in spectral graph theory \citep{chung1997spectral}. We employ findings from this body of literature to provide intuition behind the properties of the algorithms discussed here.

% \paragraph{Neural tangent kernels and gaussian processes} 
\textbf{Neural tangent kernels and gaussian processes:} 
Prior work has connected the outputs of infinite width neural networks to a corresponding gaussian process \citep{williams2006gaussian,neal1996priors,lee2017deep}. When trained using continuous time gradient descent, these infinite width models evolve as linear models under the so called neural tangent kernel (NTK) regime \citep{jacot2018neural,arora2019exact}. The discovery of the NTK opened a flurry of exploration into the connections between so-called lazy training of wide networks and kernel methods \citep{yang2019scaling,chizat2018note,wang2022and,bietti2019inductive}. Though the training dynamics of the NTK has previously been studied in the supervised settings, one can analyze an NTK in a self-supervised setting by using that kernel in the SSL algorithms that we study here. We perform some preliminary investigation into this direction in our experiments.

% \paragraph{Kernel and metric learning} 
\textbf{Kernel and metric learning:} 
From an algorithmic perspective, perhaps the closest lines of work are related to kernel and metric learning \citep{bellet2013survey,yang2006distance}. Since our focus is on directly kernelizing SSL methods to eventually analyze and better understand SSL algorithms, we differ from these methods in that our end goal is not to improve the performance of kernel methods but instead, to bridge algorithms in SSL with kernel methods. In kernel learning, prior works have proposed constructing a kernel via a learning procedure; e.g., via convex combinations of kernels \citep{cortes2010two}, kernel alignment \citep{cristianini2001kernel}, and unsupervised kernel learning to match local data geometry \citep{zhuang2011unsupervised}. Prior work in distance metric learning using kernel methods aim to produce representations of data in unsupervised or semi-supervised settings by taking advantage of links between data points. For example, \cite{baghshah2010kernel,hoi2007learning} learn to construct a kernel based on optimizing distances between points embedded in Hilbert space according to a similarity and dissimilarity matrix. \cite{yeung2007kernel} perform kernel distance metric learning in a semi-supervised setting where pairwise relations between data and labels are provided. \cite{xia2013online} propose an online procedure to learn a kernel which maps similar points closer to each other than dissimilar points. Many of these works also use semi-definite programs to perform optimization to find the optimal kernel.

\section{Contrastive and non-contrastive kernel methods}
\label{sec:main_results}
\begin{figure}
    \centering
    \includegraphics{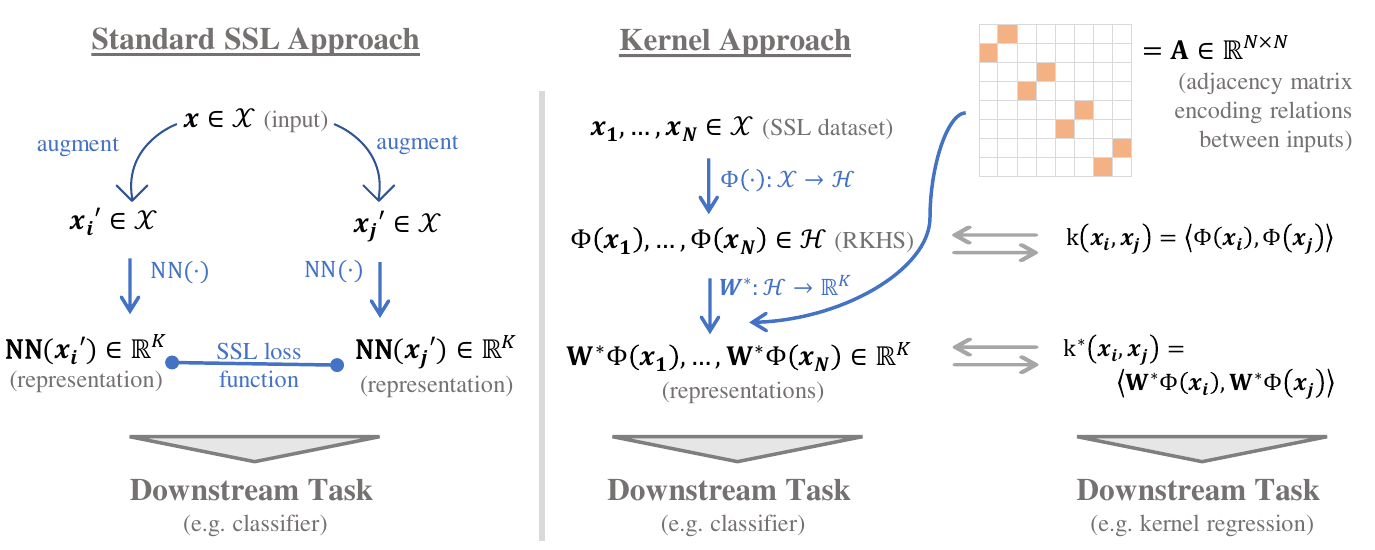}
    \caption{To translate the SSL setting into the kernel regime, we aim to find the optimal linear function which maps inputs from the RKHS into the $K$-dimensional feature space of the representations. This new feature space induces a new optimal kernel denoted the ``induced" kernel. Relationships between data-points are encoded in an adjacency matrix (the example matrix shown here contains pairwise relationships between datapoints).}
    \label{fig:illustrative_kernel_setting}
\end{figure}

Stated informally, the goal of SSL in the kernel setting is to start with a given kernel function $k:\mathcal{X} \times \mathcal{X} \to \mathbb{R}$ (e.g., RBF kernel or a neural tangent kernel) and map this kernel function to a new ``induced" kernel $k^*:\mathcal{X} \times \mathcal{X} \to \mathbb{R}$ which is a function of the SSL loss function and the SSL dataset. For two new inputs $\vx$ and $\vx'$, the induced kernel $k^*(\vx, \vx')$ generally outputs correlated values if $\vx$ and $\vx'$ are correlated in the original kernel space to some datapoint in the SSL dataset or correlated to separate but related datapoints in the SSL dataset as encoded in the graph adjacency matrix. If no relations are found in the SSL dataset between $\vx$ and $\vx'$, then the induced kernel will generally output an uncorrelated value.

To kernelize SSL methods, we consider a setting generalized from the prototypical SSL setting where representations are obtained by maximizing/minimizing distances between augmented/un-augmented samples. Translating this to the kernel regime, as illustrated in \Cref{fig:illustrative_kernel_setting}, our goal is to find a linear mapping $\mW^*:\mathcal{H} \to \mathbb{R}^K$ which obtains the optimal representation of the data for a given SSL loss function and minimizes the RKHS norm. This optimal solution produces an ``induced kernel" $k^*(\cdot, \cdot)$ which is the inner product of the data in the output representation space. Once constructed, the induced kernel can be used in downstream tasks to perform supervised learning.

Due to a generalization of the representer theorem \citep{scholkopf2001generalized}, we can show that the optimal linear function $\mW^*$ must be in the support of the data. This implies that the induced kernel can be written as a function of the kernel between datapoints in the SSL dataset.

\begin{proposition}[Form of optimal representation]
\label{prop:optimal_form}
Given a dataset $\vx_1, \dots, \vx_N \in \mathcal{X}$, let $k(\cdot, \cdot)$ be a kernel function with corresponding map $\Phi:\mathcal{X} \to \mathcal{H}$ into the RKHS $\mathcal{H}$. Let $\mW:\mathcal{H} \to \mathbb{R}^K$ be a function drawn from the space of linear functions $\mathcal{W}$ mapping inputs in the RKHS to the vector space of the representation. For a risk function $\mathcal{R}(\mW \Phi(\vx_1), \dots, \mW \Phi(\vx_N)) \in \mathbb{R}$ and any strictly increasing function $r:[0,\infty)\to \mathbb{R}$, consider the optimization problem 
\begin{equation}
    \mW^* = \argmin_{\mW\in \mathcal{W}} \mathcal{R}(\mW \Phi(\vx_1), \dots, \mW \Phi(\vx_N)) + r\left(\|\mW\|_{\mathcal{H}}\right).
\end{equation}
The optimal solutions of the above take the form
\begin{equation}
    \begin{split}
        \text{optimal representation: }& \mW^* \Phi(\vx) = \mM \vk_{\mX,\vx}  \\
        \text{induced kernel: }& k^*(\vx, \vx') = \left( \mM \vk_{\mX,\vx}\right)^\intercal \mM \vk_{\mX,\vx'} = \vk_{\mX,\vx}^\intercal \mM ^\intercal \mM \vk_{\mX,\vx'}, 
    \end{split}
\end{equation}
where $\mM \in \mathbb{R}^{K \times N}$ is a matrix that must be solved for and $\vk_{\mX,\vx} \in \mathbb{R}^{N}$ is a vector with entries $[\vk_{\mX,\vx}]_i = k(\vx_i, \vx)$.
\end{proposition}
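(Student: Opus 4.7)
The plan is to adapt the standard representer theorem argument of Sch\"olkopf et al.\ to the vector-valued output setting. Since $\mW$ is a linear map from $\mathcal{H}$ to $\mathbb{R}^K$, it is determined by its $K$ component functionals, each of which Riesz-identifies with an element $\mW_i \in \mathcal{H}$ so that $(\mW\Phi(\vx))_i = \langle \mW_i, \Phi(\vx)\rangle_{\mathcal{H}}$. This is exactly the viewpoint the excerpt already takes when it writes $\|\mW\|_{\mathcal{H}} = \sqrt{\sum_i \langle \mW_i, \mW_i\rangle_{\mathcal{H}}^2}$, and it lets me reduce the claim to an orthogonality argument in $\mathcal{H}$, component by component.

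Concretely, let $\mathcal{V} = \mathrm{span}\{\Phi(\vx_1),\dots,\Phi(\vx_N)\} \subseteq \mathcal{H}$ and decompose each component as $\mW_i = \mW_i^{\parallel} + \mW_i^{\perp}$ with $\mW_i^{\parallel} \in \mathcal{V}$ and $\mW_i^{\perp} \in \mathcal{V}^{\perp}$. The reproducing property gives $\langle \mW_i^{\perp}, \Phi(\vx_j)\rangle_{\mathcal{H}} = 0$ for all $j \in [N]$, so the predictions on the training set, and therefore the risk $\mathcal{R}(\mW\Phi(\vx_1),\dots,\mW\Phi(\vx_N))$, depend only on $\mW^{\parallel}$. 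Meanwhile, the Pythagorean identity in $\mathcal{H}$ yields $\|\mW_i\|_{\mathcal{H}}^2 = \|\mW_i^{\parallel}\|_{\mathcal{H}}^2 + \|\mW_i^{\perp}\|_{\mathcal{H}}^2$, and summing (under whichever convention the paper uses for the joint norm) gives $\|\mW\|_{\mathcal{H}} \geq \|\mW^{\parallel}\|_{\mathcal{H}}$ with equality iff every $\mW_i^{\perp}$ vanishes. Because $r$ is strictly increasing, any minimizer must satisfy $\mW_i^{\perp} = 0$ for all $i$, i.e.\ $\mW_i^* \in \mathcal{V}$.

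Once the optimum is known to lie in $\mathcal{V}$, I can expand $\mW_i^* = \sum_{j=1}^N M_{ij}\,\Phi(\vx_j)$ for real coefficients $M_{ij}$, which assembles into the matrix $\mM \in \mathbb{R}^{K\times N}$. Evaluating at an arbitrary $\vx$ via the reproducing property,
\begin{equation}
(\mW^* \Phi(\vx))_i \;=\; \sum_{j=1}^N M_{ij}\, k(\vx_j,\vx) \;=\; (\mM \vk_{\mX,\vx})_i,
\end{equation}
which is the stated form of the optimal representation. The induced kernel is then a one-line inner product computation: $k^*(\vx,\vx') = \langle \mW^*\Phi(\vx), \mW^*\Phi(\vx')\rangle_{\mathbb{R}^K} = \vk_{\mX,\vx}^{\intercal}\, \mM^{\intercal}\mM\, \vk_{\mX,\vx'}$.

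I do not expect any serious obstacle. The only mild subtlety is the vector-valued bookkeeping: one must verify that the chosen joint RKHS norm on $\mW$ is monotone in each $\|\mW_i^{\perp}\|_{\mathcal{H}}$ so that the strict monotonicity of $r$ simultaneously eliminates all orthogonal components. This is immediate from the componentwise Pythagorean identity for any reasonable aggregation (sum of squares, as in the excerpt), so the representer-style argument goes through essentially without modification, and the construction of $\mM$ is explicit.
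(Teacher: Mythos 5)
Your argument is correct and matches the paper's proof in Appendix A.1: both decompose $\mW$ into a component supported on $\mathrm{span}\{\Phi(\vx_i)\}$ plus an orthogonal remainder, observe the risk ignores the remainder while the strictly increasing regularizer penalizes it, and conclude the minimizer lies in the span. Your version is slightly more explicit in two places the paper leaves implicit (the componentwise Riesz/Pythagorean bookkeeping, and the final expansion $\mW_i^* = \sum_j M_{ij}\Phi(\vx_j)$ that produces $\mM$ and the induced kernel), but it is the same representer-theorem argument.
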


\Cref{prop:optimal_form}, proved in \Cref{app:representer}, provides a prescription for finding the optimal representations or induced kernels: i.e, one must search over the set of matrices $\mM \in \mathbb{R}^{K \times N}$ to find an optimal matrix. This search can be performed using standard optimization techniques as we will discuss later, but in certain cases, the optimal solution can be calculated in closed-form as shown next for both a contrastive and non-contrastive loss function. 

\paragraph{Non-contrastive loss}
Consider a variant of the VICReg \citep{bardes2021vicreg} loss function below:
\begin{equation}
    \mathcal{L}_{VIC} =  \left\| \mZ^{\intercal} \left(\mI - \frac{1}{N} \bm 1 \bm 1^\intercal \right) \mZ - \mI \right\|_F^2 + \beta \operatorname{Tr}\left[ \mZ^{\intercal} \mL \mZ \right],
    \label{eq:vicreg_cost}
\end{equation}
where $\beta \in \mathbb{R}^+$ is a hyperparameter that controls the invariance term in the loss and $\mL=\mD_{\mA}-\mA$ is the graph Laplacian of the data. When the representation space has dimension $K \geq N$ and the kernel matrix of the data is full rank, the induced kernel of the above loss function is:
\begin{equation}
\begin{split}
    k^*(\vx, \vx') &= \vk_{\vx,s} \mK_{s,s}^{-1} \left(\mI - \frac{1}{N} \bm 1 \bm 1^\intercal - \frac{\beta}{2} \mL \right)_+ \mK_{s,s}^{-1} \vk_{s,\vx'},
\end{split}
    \label{eq:induced_kernel}
\end{equation}
where $(\cdot)_+$ projects the matrix inside the parentheses onto the eigenspace of its positive eigenvalues, $\vk_{\vx,s} \in \mathbb{R}^{1 \times N}$ is the kernel row-vector with entry $i$ equal to $k(\vx, \vx_i)$ with $\vk_{s, \vx}$ equal to its transpose, and $\mK_{s,s} \in \mathbb{R}^{N \times N}$ is the kernel matrix of the training data for the self-supervised dataset where entry $i,j$ is equal to $k(\vx_i, \vx_j)$.  When we restrict the output space of the self-supervised learning task to be of dimension $K < N$, then the induced kernel only incorporates the top $K$ eigenvectors of $\mI - \frac{1}{N} \bm 1 \bm 1^\intercal - \frac{\beta}{2} \mL$:
\begin{equation}
    k^*(\vx, \vx') = \vk_{\vx,s} \mK_{s,s}^{-1} \mC_{:, \leq K}  \mD_{ \leq K, \leq K} \mC_{:, \leq K}^\intercal \mK_{s,s}^{-1} \vk_{s,\vx'},
\end{equation}
where $\mC \mD \mC^\intercal =\mI - \frac{1}{N} \bm 1 \bm 1^\intercal - \frac{\beta}{2} \mL$ is the eigendecomposition including only positive eigenvalues sorted in descending order, $\mC_{:, \leq K}$ denotes the matrix consisting of the first $K$ columns of $\mC$ and $\mD_{\leq K, \leq K}^{1/2}$ denotes the $K \times K$ matrix consisting of entries in the first $K$ rows and columns. Proofs of the above are in \Cref{app:vicreg_linear_form}.

\paragraph{Contrastive loss}
For contrastive SSL, we can also obtain a closed form solution to the induced kernel for a variant of the spectral contrastive loss \citep{haochen2021provable}: 
\begin{equation}
    \mathcal{L}_{sc} = \left\|\mX \mW^\intercal \mW \mX^{\intercal} - \left(\mI + \mA  \right) \right\|_F^2,
    \label{eq:contrastive_loss_matrix}
\end{equation}
where $\mA$ is the adjacency matrix encoding relations between datapoints. When the representation space has dimension $K \geq N$, this loss results in the optimal induced kernel:
\begin{equation}
    k^*(\vx, \vx') = \vk_{\vx,s} \mK_{s,s}^{-1} \left(\mI + \mA  \right)_+ \mK_{s,s}^{-1} \vk_{s,\vx'},
\label{eq:induced_kernel_contrastive}
\end{equation}
where $(\mI + \mA)_+$ is equal to the projection of $\mI + \mA$ onto its eigenspace of positive eigenvalues. In the standard SSL setting where relationships are pair-wise (i.e., $\mA_{ij} = 1$ if $\vx_i$ and $\vx_j$ are related by an augmentation), then $\mI + \mA$ has only positive or zero eigenvalues so the projection can be ignored. If $K < N$, then we similarly project the matrix $\mI + \mA$ onto its top $K$ eigenvalues and obtain an induced kernel similar to the non-contrastive one:
\begin{equation}
    k^*(\vx, \vx') = \vk_{\vx,s} \mK_{s,s}^{-1} \mC_{:, \leq K}  \mD_{ \leq K, \leq K} \mC_{:, \leq K}^\intercal \mK_{s,s}^{-1} \vk_{s,\vx'},
\end{equation}
where as before, $\mC \mD \mC^\intercal =\mI + \mA$ is the eigendecomposition including only positive eigenvalues with eigenvalues in descending order, $\mC_{:, \leq K}$ consists of the first $K$ columns of $\mC$ and $\mD_{\leq K, \leq K}^{1/2}$ is the $K \times K$ matrix of the first $K$ rows and columns. Proofs of the above are in \Cref{app:contrastive_proof}.

\subsection{General form as SDP}
\label{sec:conversion_to_SDP}
The closed form solutions for the induced kernel obtained above assumed the loss function was enforced across a single batch. Of course, in practice, data are split into several batches. This batched setting may not admit a closed-form solution, but by using \Cref{prop:optimal_form}, we know that any optimal induced kernel takes the general form:
\begin{equation}
    k^*(\vx,\vx') = \vk_{\vx,s} \mB \vk_{s,\vx'},
\end{equation}
where $\mB \in \mathbb{R}^{N \times N}$ is a positive semi-definite matrix. With constraints properly chosen so that the solution for each batch is optimal \citep{balestriero2022contrastive,haochen2022beyond}, one can find the optimal matrix $\mB^*$ by solving a semi-definite program (SDP). We perform this conversion to a SDP for the contrastive loss here and leave proofs and further details including the non-contrastive case to \Cref{app:conversion_to_SDP}. 

Introducing some notation to deal with batches, assume we have $N$ datapoints split into $n_{\text{batches}}$ of size $b$. We denote the $i$-th datapoint within batch $j$ as $\vx^{(j)}_i$. As before, $\vx_i$ denotes the $i$-th datapoint across the whole dataset. Let $\mK_{s,s} \in \mathbb{R}^{N \times N}$ be the kernel matrix over the complete dataset where $[\mK_{s,s}]_{i,j} = k(\vx_i, \vx_j)$, $\mK_{s,s_j} \in \mathbb{R}^{N \times b}$ be the kernel matrix between the complete dataset and batch $j$ where $[\mK_{s,s_j}]_{a,b} = k(\vx_a, \vx^{(j)}_b)$, and $\mA^{(j)}$ be the adjacency matrix for inputs in batch $j$. With this notation, we now aim to minimize the loss function adapted from \Cref{eq:contrastive_loss_matrix} including a regularizing term for the RKHS norm:
\begin{equation}
    \mathcal{L} = \sum_{j=1}^{n_{\text{batches}}} \left\| \mK_{s_j,s} \mB \mK_{s,s_j} - \left(\mI + \mA^{(j)}  \right) \right\|_F^2 + \alpha \Tr(\mB \mK_{s,s}),
\end{equation}
where $\alpha \in \mathbb{R^+}$ is a weighting term for the regularizer. Taking the limit of $\alpha \to 0$, we can find the optimal induced kernel for a representation of dimension $K>b$ by enforcing that optimal representations are obtained in each batch:
\begin{equation}
\begin{split}
    \min_{\mB \in \mathbb{R}^{N \times N} } & \Tr(\mB \mK_{s,s} ) \\
    \text{s.t. } & \mK_{s_j,s} \mB \mK_{s,s_j} = \left(\mI + \mA^{(j)} \right)_+ \; \; \; \; \forall j \in \{1,2,\dots,n_{\text{batches}}\} \\
    & \mB \succeq 0, \; \operatorname{rank}(\mB)=K,
\end{split}
\end{equation}
where as before, where $(\mI + \mA^{(j)})_+$ is equal to the projection of $\mI + \mA^{(j)}$ onto its eigenspace of positive eigenvalues. Relaxing and removing the constraint that $\operatorname{rank}(\mB)=K$ results in an SDP which can be efficiently solved using existing optimizers. Further details and a generalization of this conversion to other representation dimensions is shown in \Cref{app:conversion_to_SDP}.

\subsection{Interpreting the induced kernel} 
As a loose rule, the induced kernel will correlate points that are close in the kernel space or related by augmentations in the SSL dataset and uncorrelate points otherwise. As an example, note that in the contrastive setting (\Cref{eq:induced_kernel_contrastive}), if one calculates the induced kernel $k^*(\vx_i, \vx_j)$ between two points in the SSL dataset indexed by $i$ and $j$ that are related by an augmentation (i.e., $\mA_{ij}=1$), then the kernel between these two points is $k^*(\vx_i, \vx_j) = 1$. More generally, if the two inputs to the induced kernel are close in kernel space to different points in the SSL dataset that are known to be related by $\mA$, then the kernel value will be close to $1$. We formalize this intuition below for the standard setting with pairwise augmentations. 
%This feature can be observed by analyzing the kernel matrix of the SSL dataset as given by the induced kernel. Take the contrastive loss induced kernel in \Cref{eq:contrastive_loss_matrix} for example. Here, the kernel matrix over the SSL dataset denoted as $K^*_{s,s}$ takes the simple form of $K^*_{s,s}=(\mI + \mA)_+$, which is nonzero on diagonal entries (points that are directly related) and off-diagonal entries where a relationship is encoded in the adjacency matrix (e.g., via augmentations). 

\begin{restatable}{proposition}{kernelCloseBound}
\label{prop:kernel_value_lower_bound}
Given kernel function $k(\cdot, \cdot)$ with corresponding map $\Phi(\cdot)$ into the RKHS $\mathcal{H}$, let $\{\vx_1, \vx_2, \dots \vx_N\}$ be an SSL dataset normalized such that $k(\vx_i,\vx_i)=1$ and formed by pairwise augmentations (i.e., every element has exactly one neighbor in $\mA$) with kernel matrix $\mK_{s,s}$. Given two points $\vx$ and $\vx'$, if there exists two points in the SSL dataset indexed by $i$ and $j$ which are related by an augmentation ($\mA_{ij}$=1) and $\|\Phi(\vx) - \Phi(\vx_i)\|_{\mathcal{H}} \leq \frac{\Delta}{5 \||\mK_{s,s}^{-1}\| \sqrt{N} } $ and $\|\Phi(\vx') - \Phi(\vx_j)\|_{\mathcal{H}}\leq \frac{\Delta}{5 \||\mK_{s,s}^{-1}\| \sqrt{N} } $, then the induced kernel for the contrastive loss is at least $k^*(\vx,\vx') \geq 1-\Delta$.
\end{restatable}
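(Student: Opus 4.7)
The plan is to exploit the closed-form expression for the induced contrastive kernel in \Cref{eq:induced_kernel_contrastive} and reduce the bound to a first-order perturbation around the exact-match case $\vx=\vx_i$, $\vx'=\vx_j$. I would first establish the baseline value $k^*(\vx_i,\vx_j)=1$. Under the pairwise-augmentation assumption, $\mA$ is a symmetric permutation composed of disjoint transpositions, so its spectrum lies in $\{-1,+1\}$; consequently $\mI+\mA$ is already positive semi-definite with operator norm exactly $2$ and $(\mI+\mA)_+=\mI+\mA$. Moreover $\vk_{s,\vx_i}$ is literally the $i$-th column of $\mK_{s,s}$, so $\mK_{s,s}^{-1}\vk_{s,\vx_i}=\ve_i$, and the baseline evaluates cleanly to $\ve_i^\intercal(\mI+\mA)\ve_j=\mA_{ij}=1$.

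Next, for general $\vx,\vx'$ I would write $\vu:=\mK_{s,s}^{-1}\vk_{s,\vx}=\ve_i+\delta$ and $\vu':=\mK_{s,s}^{-1}\vk_{s,\vx'}=\ve_j+\delta'$, and control the perturbations. By the reproducing property and Cauchy-Schwarz, each entry satisfies
\[
|k(\vx_l,\vx)-k(\vx_l,\vx_i)| = |\langle \Phi(\vx_l),\Phi(\vx)-\Phi(\vx_i)\rangle_{\mathcal{H}}| \leq \|\Phi(\vx)-\Phi(\vx_i)\|_{\mathcal{H}},
\]
since $k(\vx_l,\vx_l)=1$. Summing over the $N$ coordinates and multiplying by $\|\mK_{s,s}^{-1}\|$ gives $\|\delta\|\leq \|\mK_{s,s}^{-1}\|\sqrt{N}\cdot\frac{\Delta}{5\|\mK_{s,s}^{-1}\|\sqrt{N}}=\Delta/5$, and likewise $\|\delta'\|\leq\Delta/5$.

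Finally, expanding
\[
k^*(\vx,\vx') = (\ve_i+\delta)^\intercal(\mI+\mA)(\ve_j+\delta') = 1 + \ve_i^\intercal(\mI+\mA)\delta' + \delta^\intercal(\mI+\mA)\ve_j + \delta^\intercal(\mI+\mA)\delta',
\]
I would bound each cross term by $\|\mI+\mA\|\cdot\|\delta\|\leq 2\Delta/5$ and the quadratic term by $2(\Delta/5)^2$. Assuming $\Delta\leq 1$ (else the statement is vacuous), the total deviation is at most $4\Delta/5+2\Delta^2/25\leq 22\Delta/25<\Delta$, yielding $k^*(\vx,\vx')\geq 1-\Delta$. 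The only conceptual subtlety, which I would flag as the main obstacle, is handling the positive-part projection $(\cdot)_+$ cleanly: the pairwise-augmentation structure is exactly what makes $\mI+\mA$ globally PSD and lets the projection disappear. Relaxing to more general adjacency matrices would force the projection to appear in both the baseline identity and the error terms, requiring a noticeably more careful spectral analysis; the remaining work is routine bookkeeping, which is why the factor $5$ in the hypothesis is comfortable for the final tally.
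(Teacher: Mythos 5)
Your proof is correct and follows essentially the same route as the paper's: both expand the quadratic form around the exact indices via $\mK_{s,s}^{-1}\vk_{s,\vx} = \ve_i + \delta$, bound $\|\delta\|\leq\Delta/5$ via the reproducing property and Cauchy--Schwarz, and control the resulting cross and quadratic terms using the spectrum of $\mI+\mA$ under the pairwise-augmentation structure. The only difference is cosmetic: the paper uses the tighter column-norm bound $\|(\mI+\mA)_+\ve_j\|=\sqrt{2}$ on the cross terms while you use the operator norm $\|\mI+\mA\|=2$, but both land comfortably inside the factor of~$5$ in the hypothesis.
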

We prove the above statement in \Cref{app:kernel_closeness_proof}. The bounds in the above statement which depend on the number of datapoints $N$ and the kernel matrix norm $\|\mK_{s,s}^{-1}\|$ are not very tight and solely meant to provide intuition for the properties of the induced kernel. In more realistic settings, stronger correlations will be observed for much weaker values of the assumptions. In light of this, we visualize the induced kernel values and their relations to the original kernel function in \Cref{sec:spiral} on a simple $2$-dimensional spiral dataset. Here, it is readily observed that the induced kernel better connects points along the data manifold that are related by the adjacency matrix.

\subsection{Downstream tasks}
In downstream tasks, one can apply the induced kernels directly on supervised algorithms such as kernel regression or SVM. Alternatively, one can also extract representations directly by obtaining the representation as $k^*(\vx,\cdot) = \mM \vk_{s,\vx} $ as shown in \Cref{prop:optimal_form} and employ any learning algorithm from these features. As an example, in kernel regression, we are given a dataset of size $N_t$ consisting of input-output pairs $\{\vx^{(t)}_i,y_i\}$ and aim to train a linear model to minimize the mean squared error loss of the outputs \citep{williams2006gaussian}. The optimal solution using an induced kernel $k^*(\cdot, \cdot)$ takes the form:
\begin{equation}
    f^*(\vx) = \left[ k^*(\vx, \vx^{(t)}_1), k^*(\vx, \vx^{(t)}_2), \dots, k^*(\vx, \vx^{(t)}_{N_t}) \right] \cdot \mK_{t,t}^{*-1} \vy,
    \label{eq:supervised_learning_kernel_output}
\end{equation}
where $\mK_{t,t}^{*-1}$ is the kernel matrix of the supervised training dataset with entry $i,j$ equal to $k^*(\vx^{(t)}_i,\vx^{(t)}_j)$ and $\vy$ is the concatenation of the targets as a vector. Note that since kernel methods generally have complexity that scales quadratically with the number of datapoints, such algorithms may be unfeasible in large-scale learning tasks unless modifications are made.

A natural question is when and why should one prefer the induced kernel of SSL to a kernel used in the standard supervised setting perhaps including data augmentation? Kernel methods generally fit a dataset perfectly so an answer to the question more likely arises from studying generalization. In kernel methods, generalization error typically tracks well with the norm of the classifier captured by the complexity quantity $s_N(\mK)$ defined as \citep{mohri2018foundations,steinwart2008support}
\begin{equation}
    s_N(\mK)= \frac{\operatorname{Tr}(\mK)}{N} \vy^\intercal \mK^{-1} \vy,\label{eq:n_s}
\end{equation}
where $\vy$ is a vector of targets and $\mK$ is the kernel matrix of the supervised dataset. For example, the generalization gap of an SVM algorithm can be bounded with high probability by $O(\sqrt{s_N(\mK)/N})$ (see example proof in \Cref{app:gen_bound}) \citep{meir2003generalization,Huang2021}. For kernels functions $k(\cdot, \cdot)$ bounded in output between $0$ and $1$, the quantity $s_N(\mK)$ is minimized in binary classification when $k(\vx,\vx')=1$ for $\vx,\vx'$ drawn from the same class and $k(\vx,\vx')=0$ for $\vx,\vx'$ drawn from distinct classes. If the induced kernel works ideally -- in the sense that it better correlates points within a class and decorrelates points otherwise -- then the entries of the kernel matrix approach these optimal values. This intuition is also supported by the hypothesis that self-supervised and semi-supervised algorithms perform well by connecting the representations of points on a common data manifold \citep{haochen2021provable,belkin2004semi}. To formalize this somewhat, consider such an ideal, yet fabricated, setting where the SSL induced kernel has complexity $s_N(\mK^*)$ that does not grow with the dataset size.
\begin{restatable}[Ideal SSL outcome]{proposition}{idealSSLOutcome}
% \begin{proposition}[Ideal SSL outcome]
\label{prop:ideal_SSL}
Given a supervised dataset of $N$ points for binary classification drawn from a distribution with $m_{-1}$ and $m_{+1}$ connected manifolds for classes with labels $-1$ and $+1$ respectively, if the induced kernel matrix of the dataset $\mK^*$ successfully separates the manifolds such that $k^*(\vx,\vx') = 1$ if $\vx,\vx'$ are in the same manifold and $k^*(\vx,\vx') = 0$ otherwise, then $s_N(\mK^*)=m_{-1}+m_{+1}=O(1)$. 
% \end{proposition}
\end{restatable}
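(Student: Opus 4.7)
The plan is to use the block structure of the ideal induced kernel matrix and then evaluate the two factors of $s_N(\mK^*)$ separately.

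First, I would reindex the $N$ supervised datapoints so that points from the same manifold are contiguous. Let $n_1,\dots,n_m$ be the sizes of the $m := m_{-1}+m_{+1}$ manifolds, so $\sum_{i=1}^{m} n_i = N$. The ideal assumption then says that $\mK^*$ is block-diagonal with blocks $\mB_i = \vone_{n_i} \vone_{n_i}^\intercal$, where $\vone_{n_i}\in\mathbb{R}^{n_i}$ is the all-ones vector. Each such block has rank one and trace $n_i$, so $\Tr(\mK^*) = \sum_{i=1}^m n_i = N$, giving $\Tr(\mK^*)/N = 1$.

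Next, since $\mK^*$ is singular, I interpret the inverse in \Cref{eq:n_s} as the Moore--Penrose pseudoinverse (equivalently, add a ridge $\lambda\mI$ and take $\lambda\to 0^+$; the limit is finite precisely because $\vy$ will lie in the range of $\mK^*$). For each rank-one block, $\mB_i^{+} = \frac{1}{n_i^2}\vone_{n_i}\vone_{n_i}^\intercal$, and $(\mK^*)^{+}$ is block-diagonal with these blocks. Crucially, the restriction $\vy_i$ of the label vector to block $i$ is constant: $\vy_i = y_i \vone_{n_i}$ with $y_i\in\{-1,+1\}$ equal to the class label of manifold $i$. Hence $\vy_i \in \mathrm{range}(\mB_i)$, which is what makes the pseudoinverse use consistent with the regularized limit.

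A direct computation gives $\vy_i^\intercal \mB_i^{+} \vy_i = \tfrac{1}{n_i^2}(y_i \vone_{n_i}^\intercal \vone_{n_i})^2 = \tfrac{1}{n_i^2}(y_i n_i)^2 = 1$ for every block, so
\begin{equation*}
    \vy^\intercal (\mK^*)^{+} \vy \;=\; \sum_{i=1}^{m} \vy_i^\intercal \mB_i^{+} \vy_i \;=\; m \;=\; m_{-1}+m_{+1}.
\end{equation*}
Combining with $\Tr(\mK^*)/N = 1$ yields $s_N(\mK^*) = m_{-1}+m_{+1}$, which is $O(1)$ under the stated assumption that the number of connected manifolds is fixed independently of $N$.

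The only delicate point is the singularity of $\mK^*$: one has to justify reading $\mK^{-1}$ in \Cref{eq:n_s} as a pseudoinverse. This is easily handled because $\vy$ lies in $\mathrm{range}(\mK^*)$ (it is block-constant with the same support as the blocks), so the ridge-regularized quantity $\Tr(\mK^*)\,\vy^\intercal(\mK^* + \lambda\mI)^{-1}\vy/N$ has a well-defined finite limit as $\lambda\to 0^+$ equal to the pseudoinverse expression above. Everything else is routine block-diagonal linear algebra.
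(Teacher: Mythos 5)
Your argument is correct and follows essentially the same route as the paper: permute to a block-diagonal kernel, compute each block's contribution, and sum. In fact, your version is the cleaner one: you keep the blocks as $\vone_{n_i}\vone_{n_i}^\intercal$ (consistent with $k^*(\vx,\vx')=1$ on the same manifold, hence $\Tr(\mK^*)=N$) and explicitly use the Moore--Penrose pseudoinverse $\tfrac{1}{n_i^2}\vone_{n_i}\vone_{n_i}^\intercal$ for each rank-one block, justifying this by observing $\vy_i\in\mathrm{range}(\mB_i)$ so the ridge-regularized limit is well defined; the paper instead writes the blocks as $\tfrac{1}{\#(i)}\vone\vone^\intercal$ and asserts $\langle\vone,\vy_{m_i}\rangle=\sqrt{\#(i)}$, which does not match either the proposition's normalization or the standard $\pm1$ label convention, so your computation is the one that actually checks out term by term.
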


% \begin{restatable}[Ideal SSL outcome]{proposition}{idealSSLOutcome}
% % \begin{proposition}[Ideal SSL outcome]
% \label{prop:ideal_SSL}
% For binary classification with a supervised dataset of $N$ points drawn from a distribution with $m$ connected manifolds for each class, assume the data is distributed such that the distance between two random points within a class and between two classes is distributed uniformly between $[0,a]$ and $[b,1]$ respectively for $a<b<1$. Let $\mK$ and $\mK^*$ be the kernel matrices for the RBF kernel and the induced kernel respectively. If the induced kernel successfully separates the manifolds such that $k^*(\vx,\vx') = 1$ if $\vx,\vx'$ are in the same manifold and $k^*(\vx,\vx') = 0$ otherwise, then $s_N(\mK^*)<s_N(\mK)$. 
% % \end{proposition}
% \end{restatable}
The simple proof of the above is in \Cref{app:ideal_supervised_setting}. In short, we conjecture that SSL should be preferred in such settings where the relationships between datapoints are ``strong" enough to connect similar points in a class on the same manifold. We analyze the quantity $s_N(\mK)$ in \Cref{sec:downstream_analysis} to add further empirical evidence behind this hypothesis.

\begin{figure}[t!]
    \centering
    \includegraphics[width=\textwidth]{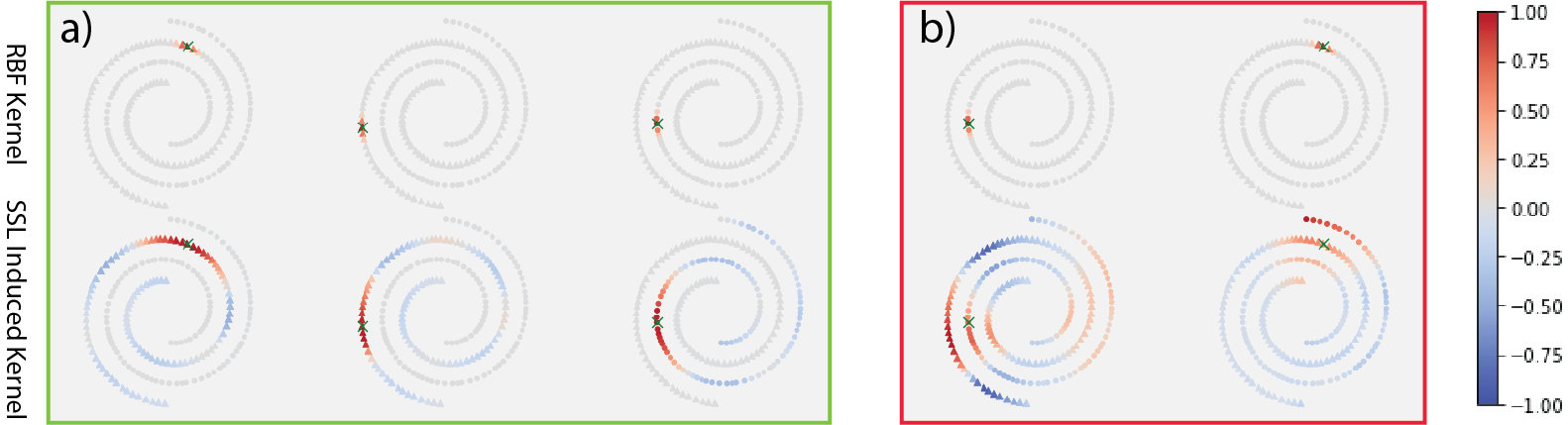}
    \caption{Comparison of the RBF kernel space (first row) and induced kernel space (second row). The induced kernel is computed based on Equation~\ref{eq:induced_kernel}, and the graph Laplacian matrix is derived from the inner product neighborhood in the RBF kernel space, i.e., using the neighborhoods as data augmentation.  a) We plot three randomly chosen points' kernel entries with respect to the other points on the manifolds. When the neighborhood augmentation range used to construct the Laplacian matrix is small enough, the SSL-induced kernel faithfully learns the topology of the entangled spiral manifolds. b) When the neighborhood augmentation range used to construct the Laplacian matrix is too large, it creates the ``short-circuit'' effect in the induced kernel space. Each subplot on the second row is normalized by its largest absolute value for better contrast.}
    \label{fig:spiral}
\end{figure}

\section{Experiments}
In this section, we empirically investigate the performance and properties of the SSL kernel methods on a toy spiral dataset and portions of the MNIST and eMNIST datasets for hand-drawn digits and characters \citep{cohen2017emnist}. As with other works, we focus on small-data tasks where kernel methods can be performed efficiently without modifications needed for handling large datasets \citep{arora2019exact,fernandez2014we}. For simplicity and ease of analysis, we perform experiments here with respect to the RBF kernel. Additional experiments reinforcing these findings and also including analysis with neural tangent kernels can be found in \Cref{app:additional_experiments}.

% \subsection{Generalization bounds}
% Generalization bounds provide a theoretically sound means to measure the complexity of a function class and bound the difference between empirical and true error rates. For kernel methods, generalization bounds are often relatively tight even when applied to neural tangent kernels and neural network gaussian processes. Generalization bounds typically take the form below \citep{mohri2018foundations}:
% \begin{equation}
%     \mathbb{E}_{x \sim \mathcal{D}} \left| h(x) - f(x) \right| \leq c \sqrt{s_N(\mK)/N},
% \end{equation}
% where $\mathcal{D}$ is the distribution of data, $h(x)$ is the function chosen by the learning algorithm, $f(x)$ is the unknown target function, and $c$ is a constant that can be bounded. The key quantity is the complexity quantity $s_N(\mK)$ which is a function of the hypothesis class that the learning algorithm has access to. We employ a Rademacher complexity bound with
% \begin{equation}
%     s_N(\mK)= \frac{\operatorname{Tr}(\mK)}{N} \vy^\intercal \mK^{-1} \vy    
% \end{equation}
% where $\vy$ is the vector of targets and $\mK$ is the $N \times N$ kernel matrix for the supervised learning dataset \citep{mohri2018foundations,Huang2021,meir2003generalization}. A formal proof of the above generalization bound is included in \Cref{app:gen_bound}. In our experiments, we calculate the above bound as another means to determine whether SSL methods obtain improved convergence. 

\begin{figure}[t!]
    \centering
    \begin{minipage}{0.49\linewidth}
    \centering
    contrastive-MNIST\\
    \includegraphics[width=\linewidth]{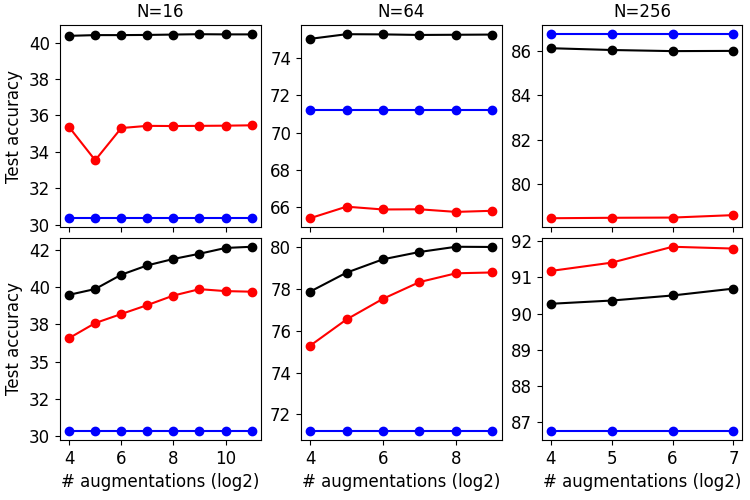}
    \end{minipage}
    \begin{minipage}{0.49\linewidth}
    \centering
    contrastive-EMNIST\\
    \includegraphics[width=\linewidth]{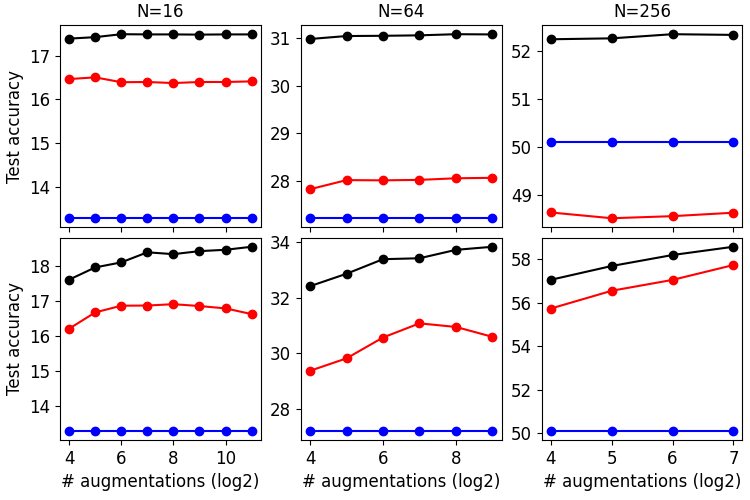}
    \end{minipage}
    \begin{minipage}{0.49\linewidth}
    \centering
    non-contrastive-MNIST\\
    \includegraphics[width=\linewidth]{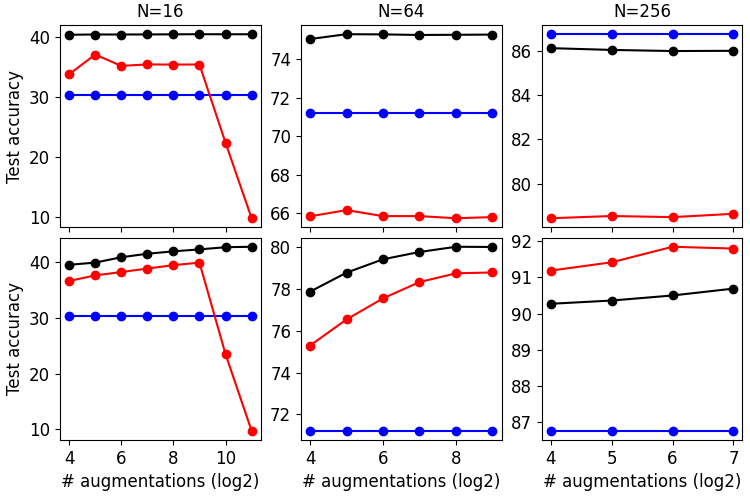}
    \end{minipage}
    \begin{minipage}{0.49\linewidth}
    \centering
    non-contrastive-EMNIST\\
    \includegraphics[width=\linewidth]{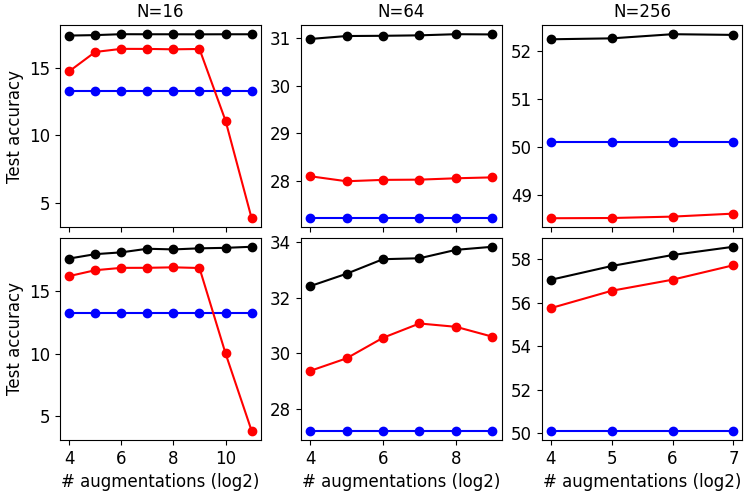}
    \end{minipage}\\[-0.5em]
    \caption{Depiction of MNIST and EMNIST full test set performances using the contrastive and non-contrastive kernels (\textcolor{red}{{\bf in red}}) and benchmarked against the supervised case with labels on all samples (original + augmented) {\bf in black} and with labels only on the original samples \textcolor{blue}{{\bf in blue}}, with the number of original samples given by $N \in \{16,64,256\}$ (each {\bf column}) and the number of augmented samples (in log2) in {\bf x-axis}. The {\bf first row} corresponds to Gaussian blur data-augmentation (poorly aligned with the data distributions) and the {\bf second row} corresponds to random rotations (-10,10), translations (-0.1,0.1) and scaling (0.9,1.1). We set the SVM $\ell_2$ regularization to $0.001$ and use the RBF kernel (NTK kernels in \Cref{app:NTK_experiments}). We restrict to settings where the total number of samples does not except $50k$, and in all cases, the kernel representation dimensions are unconstrained. Two key observations emerge. First, whenever the augmentation is not aligned with the data distribution, the SSL kernels falls below the supervised case, especially as $N$ increases. Second, when the augmentation is aligned with the data distribution, both SSL kernels are able to get close and even outperform the supervised benchmark with augmented labels.}
    \label{fig:mnist}
\end{figure}

\subsection{Visualizing the induced kernel on the spiral dataset}
\label{sec:spiral}

For an intuitive understanding, we provide a visualization in \Cref{fig:spiral} to show how the SSL-induced kernel is helping to manipulate the distance and disentangle manifolds in the representation space. In \Cref{fig:spiral}, we study two entangled 1-D spiral manifolds in a 2D space with 200 training points uniformly distributed on the spiral manifolds. We use the non-contrastive SSL-induced kernel, following \Cref{eq:induced_kernel}, to demonstrate this result, whereas a contrastive SSL-induced kernel is qualitatively similar and left to \Cref{appsec:contrastive_spirals}. In the RBF kernel space shown in the first row of \Cref{fig:spiral}, the value of the kernel is captured purely by distance.
%We use the radial basis function (RBF) kernel to calculate the inner products between different points and plot a few points' RBF neighborhoods in the first row of \Cref{fig:spiral}, i.e., $k(x_1,x_2) = \text{exp}(-\frac{\|x_1-x_2\|^2}{2\sigma^2})$. As we can see, the RBF kernel captures the locality of the 2D space. 
Next, to consider the SSL setting, we construct the graph Laplacian matrix $\mL$ by connecting vertices between any training points with $k(x_1,x_2)>d$, i.e., $\mL_{ij}=-1$ if $k(x_i,x_j)>d$ and $\mL_{ij}=0$ otherwise. The diagonal entries of $\mL$ are equal to the degree of the vertices, respectively. This construction can be viewed as using the Euclidean neighborhoods of $x$ as the data augmentation. We choose $\beta = 0.4$, where other choices within a reasonable range lead to similar qualitative results. In the second row of \Cref{fig:spiral}, we show the induced kernel between selected points (marked with an x) and other training points in the SSL-induced kernel space. When $d$ is chosen properly, as observed in the second row of \Cref{fig:spiral}(a), the SSL-induced kernel faithfully captures the topology of manifolds and leads to a more disentangled representation. However, the augmentation has to be carefully chosen as \Cref{fig:spiral}(b) shows that when $d$ is too large, the two manifolds become mixed in the representation space. 

% \bk{state that we use nearest neighbor for augmentation}

\subsection{Classification Experiments}
\label{sec:experiments_mnist}

\begin{figure}[t!]
    \centering
    \includegraphics[width=0.49\linewidth]{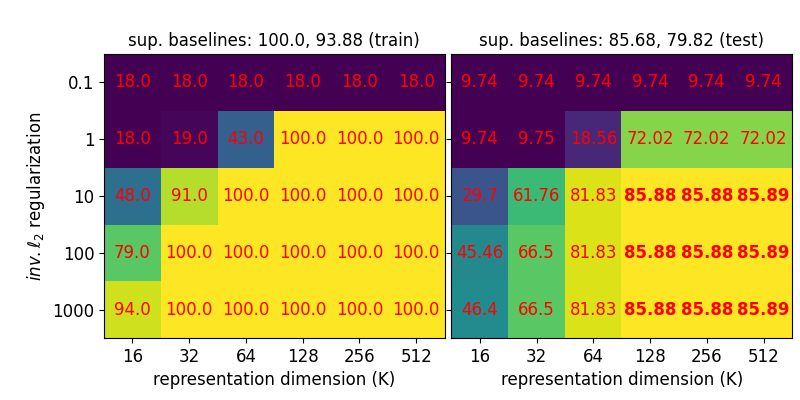}
    \includegraphics[width=0.49\linewidth]{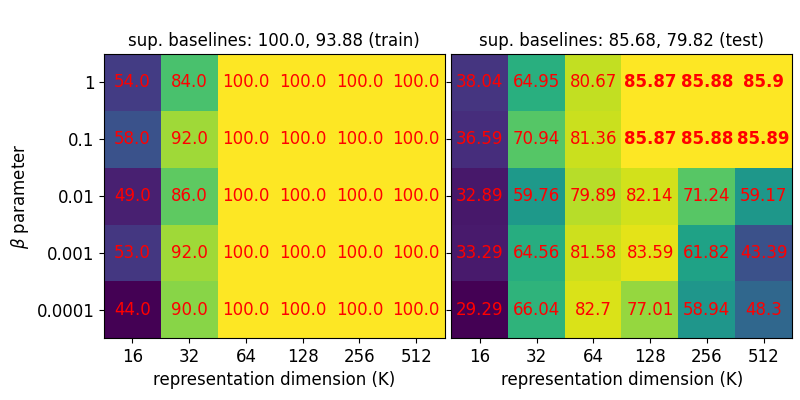}\\[-0.5em]
    \caption{MNIST classification task with $100$ original training samples and $100$ augmentations per sample on the train and test set (full, $10,000$ samples) with baselines in the titles given by supervised SVM using labels for all samples or original ($100$) samples only. We provide on the {\bf left} the contrastive kernel performances when ablating over the (inverse) of the SVM's $\ell_2$ regularizer {\bf y-axis} and representation dimension ($K$) in the {\bf x-axis}. Similarly, we provide on the {\bf right} the non-contrastive kernel performances when ablating over $\beta$ on the {\bf y-axis} and over the representation dimension ($K$) on the {\bf x-axis}. We observe, as expected, that reducing $K$ prevents overfitting and should be preferred over the $\ell_2$ regularizer, and that $\beta$ acts jointly with the representation dimension i.e. one only needs to tune one of the two.}
    \label{fig:ablation}
\end{figure}

We explore in \Cref{fig:mnist} the supervised classification setting of MNIST and EMNIST which consist of $28 \times 28$ grayscale images. (E)MNIST provide a strong baseline to evaluate kernel methods due to the absence of background in the images making kernels such as RBF more aligned to measure input similarities. In this setting, we explore two different data-augmentation (DA) policies, one aligned with the data distribution (rotation+translation+scaling) and one largely misaligned with the data distribution (aggressive Gaussian blur). Because our goal is to understand how much DA impacts the SSL kernel compared to a fully supervised benchmark, we consider two (supervised) benchmarks: one that employs the labels of the sampled training set and all the augmented samples and one that only employs the sampled training set and no augmented samples. We explore a small training set size going from $N=16$ to $N=256$ and for each case we produce a number of augmented samples for each datapoint so that the total number of samples does not exceed $50,000$ which is a standard threshold for kernel methods. We note that our implementation is based on the common Python scientific library Numpy/Scipy \citep{harris2020array} and runs on CPU. We observed in \Cref{fig:mnist} that the SSL kernel is able to match and even outperform the fully supervised case when employing the correct data-augmentation, while with the incorrect data-augmentation, the performance is not even able to match the supervised case that did not see the augmented samples. To better understand the impact of different hyper-parameters onto the two kernels, we also study in \Cref{fig:ablation} the MNIST test set performances when varying the representation dimension $K$, the SVM's $\ell_2$ regularization parameter, and the non-contrastive kernel's $\beta$ parameter. We observe that although $\beta$ is an additional hyper-parameter to tune, its tuning plays a similar role to $K$, the representation dimension. Hence, in practice, the design of the non-contrastive model can be controlled with a single parameter as with the contrastive setting. We also interestingly observe that $K$ is a more preferable parameter to tune to prevent over-fitting as opposed to SVM's $\ell_2$ regularizer.

\section{Discussion}
Our work explores the properties of SSL algorithms when trained via kernel methods. Connections between kernel methods and neural networks have gained significant interest in the supervised learning setting \citep{neal1996priors,lee2017deep} for their potential insights into the training of deep networks. As we show in this study, such insights into the training properties of SSL algorithms can similarly be garnered from an analysis of SSL algorithms in the kernel regime. Our theoretical and empirical analysis, for example, highlights the importance of the choice of augmentations and encoded relationships between data points on downstream performance. Looking forward, we believe that interrelations between this kernel regime and the actual deep networks trained in practice can be strengthened particularly by analyzing the neural tangent kernel. In line with similar analysis in the supervised setting \citep{yang2022tensor,seleznova2022analyzing,lee2020finite}, neural tangent kernels and their corresponding induced kernels in the SSL setting may shine light on some of the theoretical properties of the finite width networks used in practice.

\bibliography{main}

\begin{thebibliography}{54}
\providecommand{\natexlab}[1]{#1}
\providecommand{\url}[1]{\texttt{#1}}
\expandafter\ifx\csname urlstyle\endcsname\relax
  \providecommand{\doi}[1]{doi: #1}\else
  \providecommand{\doi}{doi: \begingroup \urlstyle{rm}\Url}\fi

\bibitem[ApS(2019)]{mosek}
MOSEK ApS.
\newblock \emph{The MOSEK optimization toolbox for MATLAB manual. Version
  9.0.}, 2019.
\newblock URL \url{http://docs.mosek.com/9.0/toolbox/index.html}.

\bibitem[Arora et~al.(2019{\natexlab{a}})Arora, Du, Hu, Li, Salakhutdinov, and
  Wang]{arora2019exact}
Sanjeev Arora, Simon~S Du, Wei Hu, Zhiyuan Li, Russ~R Salakhutdinov, and
  Ruosong Wang.
\newblock On exact computation with an infinitely wide neural net.
\newblock \emph{Advances in Neural Information Processing Systems}, 32,
  2019{\natexlab{a}}.

\bibitem[Arora et~al.(2019{\natexlab{b}})Arora, Khandeparkar, Khodak,
  Plevrakis, and Saunshi]{arora2019theoretical}
Sanjeev Arora, Hrishikesh Khandeparkar, Mikhail Khodak, Orestis Plevrakis, and
  Nikunj Saunshi.
\newblock A theoretical analysis of contrastive unsupervised representation
  learning.
\newblock \emph{arXiv preprint arXiv:1902.09229}, 2019{\natexlab{b}}.

\bibitem[Baghshah \& Shouraki(2010)Baghshah and Shouraki]{baghshah2010kernel}
Mahdieh~Soleymani Baghshah and Saeed~Bagheri Shouraki.
\newblock Kernel-based metric learning for semi-supervised clustering.
\newblock \emph{Neurocomputing}, 73\penalty0 (7-9):\penalty0 1352--1361, 2010.

\bibitem[Balestriero \& LeCun(2022)Balestriero and
  LeCun]{balestriero2022contrastive}
Randall Balestriero and Yann LeCun.
\newblock Contrastive and non-contrastive self-supervised learning recover
  global and local spectral embedding methods.
\newblock \emph{arXiv preprint arXiv:2205.11508}, 2022.

\bibitem[Bardes et~al.(2021)Bardes, Ponce, and LeCun]{bardes2021vicreg}
Adrien Bardes, Jean Ponce, and Yann LeCun.
\newblock Vicreg: Variance-invariance-covariance regularization for
  self-supervised learning.
\newblock \emph{arXiv preprint arXiv:2105.04906}, 2021.

\bibitem[Bartlett \& Mendelson(2002)Bartlett and
  Mendelson]{bartlett2002rademacher}
Peter~L Bartlett and Shahar Mendelson.
\newblock Rademacher and gaussian complexities: Risk bounds and structural
  results.
\newblock \emph{Journal of Machine Learning Research}, 3\penalty0
  (Nov):\penalty0 463--482, 2002.

\bibitem[Belkin \& Niyogi(2004)Belkin and Niyogi]{belkin2004semi}
Mikhail Belkin and Partha Niyogi.
\newblock Semi-supervised learning on riemannian manifolds.
\newblock \emph{Machine learning}, 56\penalty0 (1):\penalty0 209--239, 2004.

\bibitem[Bellet et~al.(2013)Bellet, Habrard, and Sebban]{bellet2013survey}
Aur{\'e}lien Bellet, Amaury Habrard, and Marc Sebban.
\newblock A survey on metric learning for feature vectors and structured data.
\newblock \emph{arXiv preprint arXiv:1306.6709}, 2013.

\bibitem[Bietti \& Mairal(2019)Bietti and Mairal]{bietti2019inductive}
Alberto Bietti and Julien Mairal.
\newblock On the inductive bias of neural tangent kernels.
\newblock \emph{Advances in Neural Information Processing Systems}, 32, 2019.

\bibitem[Boyd et~al.(2004)Boyd, Boyd, and Vandenberghe]{boyd2004convex}
Stephen Boyd, Stephen~P Boyd, and Lieven Vandenberghe.
\newblock \emph{Convex optimization}.
\newblock Cambridge university press, 2004.

\bibitem[Caron et~al.(2020)Caron, Misra, Mairal, Goyal, Bojanowski, and
  Joulin]{caron2020unsupervised}
Mathilde Caron, Ishan Misra, Julien Mairal, Priya Goyal, Piotr Bojanowski, and
  Armand Joulin.
\newblock Unsupervised learning of visual features by contrasting cluster
  assignments.
\newblock \emph{Advances in Neural Information Processing Systems},
  33:\penalty0 9912--9924, 2020.

\bibitem[Chen et~al.(2020)Chen, Kornblith, Norouzi, and Hinton]{chen2020simple}
Ting Chen, Simon Kornblith, Mohammad Norouzi, and Geoffrey Hinton.
\newblock A simple framework for contrastive learning of visual
  representations.
\newblock In \emph{International conference on machine learning}, pp.\
  1597--1607. PMLR, 2020.

\bibitem[Chen \& He(2021)Chen and He]{chen2021exploring}
Xinlei Chen and Kaiming He.
\newblock Exploring simple siamese representation learning.
\newblock In \emph{Proceedings of the IEEE/CVF Conference on Computer Vision
  and Pattern Recognition}, pp.\  15750--15758, 2021.

\bibitem[Chizat \& Bach(2018)Chizat and Bach]{chizat2018note}
Lenaic Chizat and Francis Bach.
\newblock A note on lazy training in supervised differentiable programming.
\newblock \emph{arXiv preprint arXiv:1812.07956}, 8, 2018.

\bibitem[Chung(1997)]{chung1997spectral}
Fan~RK Chung.
\newblock \emph{Spectral graph theory}, volume~92.
\newblock American Mathematical Soc., 1997.

\bibitem[Cohen et~al.(2017)Cohen, Afshar, Tapson, and
  Van~Schaik]{cohen2017emnist}
Gregory Cohen, Saeed Afshar, Jonathan Tapson, and Andre Van~Schaik.
\newblock Emnist: Extending mnist to handwritten letters.
\newblock In \emph{2017 international joint conference on neural networks
  (IJCNN)}, pp.\  2921--2926. IEEE, 2017.

\bibitem[Cortes et~al.(2010)Cortes, Mohri, and Rostamizadeh]{cortes2010two}
Corinna Cortes, Mehryar Mohri, and Afshin Rostamizadeh.
\newblock Two-stage learning kernel algorithms.
\newblock In \emph{Proceedings of the 27th International Conference on
  International Conference on Machine Learning}, pp.\  239--246, 2010.

\bibitem[Cristianini et~al.(2001)Cristianini, Shawe-Taylor, Elisseeff, and
  Kandola]{cristianini2001kernel}
Nello Cristianini, John Shawe-Taylor, Andre Elisseeff, and Jaz Kandola.
\newblock On kernel-target alignment.
\newblock \emph{Advances in neural information processing systems}, 14, 2001.

\bibitem[Dwibedi et~al.(2021)Dwibedi, Aytar, Tompson, Sermanet, and
  Zisserman]{dwibedi2021little}
Debidatta Dwibedi, Yusuf Aytar, Jonathan Tompson, Pierre Sermanet, and Andrew
  Zisserman.
\newblock With a little help from my friends: Nearest-neighbor contrastive
  learning of visual representations.
\newblock In \emph{Proceedings of the IEEE/CVF International Conference on
  Computer Vision}, pp.\  9588--9597, 2021.

\bibitem[Fern{\'a}ndez-Delgado et~al.(2014)Fern{\'a}ndez-Delgado, Cernadas,
  Barro, and Amorim]{fernandez2014we}
Manuel Fern{\'a}ndez-Delgado, Eva Cernadas, Sen{\'e}n Barro, and Dinani Amorim.
\newblock Do we need hundreds of classifiers to solve real world classification
  problems?
\newblock \emph{The journal of machine learning research}, 15\penalty0
  (1):\penalty0 3133--3181, 2014.

\bibitem[Grill et~al.(2020)Grill, Strub, Altch{\'e}, Tallec, Richemond,
  Buchatskaya, Doersch, Avila~Pires, Guo, Gheshlaghi~Azar,
  et~al.]{grill2020bootstrap}
Jean-Bastien Grill, Florian Strub, Florent Altch{\'e}, Corentin Tallec, Pierre
  Richemond, Elena Buchatskaya, Carl Doersch, Bernardo Avila~Pires, Zhaohan
  Guo, Mohammad Gheshlaghi~Azar, et~al.
\newblock Bootstrap your own latent-a new approach to self-supervised learning.
\newblock \emph{Advances in neural information processing systems},
  33:\penalty0 21271--21284, 2020.

\bibitem[HaoChen et~al.(2021)HaoChen, Wei, Gaidon, and Ma]{haochen2021provable}
Jeff~Z HaoChen, Colin Wei, Adrien Gaidon, and Tengyu Ma.
\newblock Provable guarantees for self-supervised deep learning with spectral
  contrastive loss.
\newblock \emph{Advances in Neural Information Processing Systems},
  34:\penalty0 5000--5011, 2021.

\bibitem[HaoChen et~al.(2022)HaoChen, Wei, Kumar, and Ma]{haochen2022beyond}
Jeff~Z HaoChen, Colin Wei, Ananya Kumar, and Tengyu Ma.
\newblock Beyond separability: Analyzing the linear transferability of
  contrastive representations to related subpopulations.
\newblock \emph{arXiv preprint arXiv:2204.02683}, 2022.

\bibitem[Harris et~al.(2020)Harris, Millman, Van Der~Walt, Gommers, Virtanen,
  Cournapeau, Wieser, Taylor, Berg, Smith, et~al.]{harris2020array}
Charles~R Harris, K~Jarrod Millman, St{\'e}fan~J Van Der~Walt, Ralf Gommers,
  Pauli Virtanen, David Cournapeau, Eric Wieser, Julian Taylor, Sebastian Berg,
  Nathaniel~J Smith, et~al.
\newblock Array programming with numpy.
\newblock \emph{Nature}, 585\penalty0 (7825):\penalty0 357--362, 2020.

\bibitem[He et~al.(2020)He, Fan, Wu, Xie, and Girshick]{he2020momentum}
Kaiming He, Haoqi Fan, Yuxin Wu, Saining Xie, and Ross Girshick.
\newblock Momentum contrast for unsupervised visual representation learning.
\newblock In \emph{Proceedings of the IEEE/CVF conference on computer vision
  and pattern recognition}, pp.\  9729--9738, 2020.

\bibitem[Hoi et~al.(2007)Hoi, Jin, and Lyu]{hoi2007learning}
Steven~CH Hoi, Rong Jin, and Michael~R Lyu.
\newblock Learning nonparametric kernel matrices from pairwise constraints.
\newblock In \emph{Proceedings of the 24th international conference on Machine
  learning}, pp.\  361--368, 2007.

\bibitem[Huang et~al.(2021)Huang, Broughton, Mohseni, Babbush, Boixo, Neven,
  and McClean]{Huang2021}
Hsin-Yuan Huang, Michael Broughton, Masoud Mohseni, Ryan Babbush, Sergio Boixo,
  Hartmut Neven, and Jarrod~R. McClean.
\newblock Power of data in quantum machine learning.
\newblock \emph{Nature Communications}, 12\penalty0 (1), May 2021.
\newblock \doi{10.1038/s41467-021-22539-9}.
\newblock URL \url{https://doi.org/10.1038/s41467-021-22539-9}.

\bibitem[Jacot et~al.(2018)Jacot, Gabriel, and Hongler]{jacot2018neural}
Arthur Jacot, Franck Gabriel, and Cl{\'e}ment Hongler.
\newblock Neural tangent kernel: Convergence and generalization in neural
  networks.
\newblock \emph{Advances in neural information processing systems}, 31, 2018.

\bibitem[Lee et~al.(2017)Lee, Bahri, Novak, Schoenholz, Pennington, and
  Sohl-Dickstein]{lee2017deep}
Jaehoon Lee, Yasaman Bahri, Roman Novak, Samuel~S Schoenholz, Jeffrey
  Pennington, and Jascha Sohl-Dickstein.
\newblock Deep neural networks as gaussian processes.
\newblock \emph{arXiv preprint arXiv:1711.00165}, 2017.

\bibitem[Lee et~al.(2020)Lee, Schoenholz, Pennington, Adlam, Xiao, Novak, and
  Sohl-Dickstein]{lee2020finite}
Jaehoon Lee, Samuel Schoenholz, Jeffrey Pennington, Ben Adlam, Lechao Xiao,
  Roman Novak, and Jascha Sohl-Dickstein.
\newblock Finite versus infinite neural networks: an empirical study.
\newblock \emph{Advances in Neural Information Processing Systems},
  33:\penalty0 15156--15172, 2020.

\bibitem[Lee et~al.(2021)Lee, Lei, Saunshi, and Zhuo]{lee2021predicting}
Jason~D Lee, Qi~Lei, Nikunj Saunshi, and Jiacheng Zhuo.
\newblock Predicting what you already know helps: Provable self-supervised
  learning.
\newblock \emph{Advances in Neural Information Processing Systems},
  34:\penalty0 309--323, 2021.

\bibitem[Meir \& Zhang(2003)Meir and Zhang]{meir2003generalization}
Ron Meir and Tong Zhang.
\newblock Generalization error bounds for bayesian mixture algorithms.
\newblock \emph{Journal of Machine Learning Research}, 4\penalty0
  (Oct):\penalty0 839--860, 2003.

\bibitem[Mohri et~al.(2018)Mohri, Rostamizadeh, and
  Talwalkar]{mohri2018foundations}
Mehryar Mohri, Afshin Rostamizadeh, and Ameet Talwalkar.
\newblock \emph{Foundations of machine learning}.
\newblock MIT press, 2018.

\bibitem[Neal(1996)]{neal1996priors}
Radford~M Neal.
\newblock Priors for infinite networks.
\newblock In \emph{Bayesian Learning for Neural Networks}, pp.\  29--53.
  Springer, 1996.

\bibitem[Novak et~al.(2020)Novak, Xiao, Hron, Lee, Alemi, Sohl-Dickstein, and
  Schoenholz]{neuraltangents2020}
Roman Novak, Lechao Xiao, Jiri Hron, Jaehoon Lee, Alexander~A. Alemi, Jascha
  Sohl-Dickstein, and Samuel~S. Schoenholz.
\newblock Neural tangents: Fast and easy infinite neural networks in python.
\newblock In \emph{International Conference on Learning Representations}, 2020.
\newblock URL \url{https://github.com/google/neural-tangents}.

\bibitem[Oord et~al.(2018)Oord, Li, and Vinyals]{oord2018representation}
Aaron van~den Oord, Yazhe Li, and Oriol Vinyals.
\newblock Representation learning with contrastive predictive coding.
\newblock \emph{arXiv preprint arXiv:1807.03748}, 2018.

\bibitem[Sch{\"o}lkopf et~al.(2001)Sch{\"o}lkopf, Herbrich, and
  Smola]{scholkopf2001generalized}
Bernhard Sch{\"o}lkopf, Ralf Herbrich, and Alex~J Smola.
\newblock A generalized representer theorem.
\newblock In \emph{International conference on computational learning theory},
  pp.\  416--426. Springer, 2001.

\bibitem[Seleznova \& Kutyniok(2022)Seleznova and
  Kutyniok]{seleznova2022analyzing}
Mariia Seleznova and Gitta Kutyniok.
\newblock Analyzing finite neural networks: Can we trust neural tangent kernel
  theory?
\newblock In \emph{Mathematical and Scientific Machine Learning}, pp.\
  868--895. PMLR, 2022.

\bibitem[Steinwart \& Christmann(2008)Steinwart and
  Christmann]{steinwart2008support}
Ingo Steinwart and Andreas Christmann.
\newblock \emph{Support vector machines}.
\newblock Springer Science \& Business Media, 2008.

\bibitem[Sturm(1999)]{sturm1999using}
Jos~F Sturm.
\newblock Using sedumi 1.02, a matlab toolbox for optimization over symmetric
  cones.
\newblock \emph{Optimization methods and software}, 11\penalty0 (1-4):\penalty0
  625--653, 1999.

\bibitem[Vapnik(1999)]{vapnik1999nature}
Vladimir Vapnik.
\newblock \emph{The nature of statistical learning theory}.
\newblock Springer science \& business media, 1999.

\bibitem[Wang et~al.(2022)Wang, Yu, and Perdikaris]{wang2022and}
Sifan Wang, Xinling Yu, and Paris Perdikaris.
\newblock When and why pinns fail to train: A neural tangent kernel
  perspective.
\newblock \emph{Journal of Computational Physics}, 449:\penalty0 110768, 2022.

\bibitem[Wei et~al.(2020)Wei, Shen, Chen, and Ma]{wei2020theoretical}
Colin Wei, Kendrick Shen, Yining Chen, and Tengyu Ma.
\newblock Theoretical analysis of self-training with deep networks on unlabeled
  data.
\newblock \emph{arXiv preprint arXiv:2010.03622}, 2020.

\bibitem[Wen \& Li(2021)Wen and Li]{wen2021toward}
Zixin Wen and Yuanzhi Li.
\newblock Toward understanding the feature learning process of self-supervised
  contrastive learning.
\newblock In \emph{International Conference on Machine Learning}, pp.\
  11112--11122. PMLR, 2021.

\bibitem[Williams \& Rasmussen(2006)Williams and
  Rasmussen]{williams2006gaussian}
Christopher~KI Williams and Carl~Edward Rasmussen.
\newblock \emph{Gaussian processes for machine learning}, volume~2.
\newblock MIT press Cambridge, MA, 2006.

\bibitem[Xia et~al.(2013)Xia, Hoi, Jin, and Zhao]{xia2013online}
Hao Xia, Steven~CH Hoi, Rong Jin, and Peilin Zhao.
\newblock Online multiple kernel similarity learning for visual search.
\newblock \emph{IEEE Transactions on Pattern Analysis and Machine
  Intelligence}, 36\penalty0 (3):\penalty0 536--549, 2013.

\bibitem[Yang(2019)]{yang2019scaling}
Greg Yang.
\newblock Scaling limits of wide neural networks with weight sharing: Gaussian
  process behavior, gradient independence, and neural tangent kernel
  derivation.
\newblock \emph{arXiv preprint arXiv:1902.04760}, 2019.

\bibitem[Yang et~al.(2022)Yang, Hu, Babuschkin, Sidor, Liu, Farhi, Ryder,
  Pachocki, Chen, and Gao]{yang2022tensor}
Greg Yang, Edward~J Hu, Igor Babuschkin, Szymon Sidor, Xiaodong Liu, David
  Farhi, Nick Ryder, Jakub Pachocki, Weizhu Chen, and Jianfeng Gao.
\newblock Tensor programs v: Tuning large neural networks via zero-shot
  hyperparameter transfer.
\newblock \emph{arXiv preprint arXiv:2203.03466}, 2022.

\bibitem[Yang \& Jin(2006)Yang and Jin]{yang2006distance}
Liu Yang and Rong Jin.
\newblock Distance metric learning: A comprehensive survey.
\newblock \emph{Michigan State Universiy}, 2\penalty0 (2):\penalty0 4, 2006.

\bibitem[Ye et~al.(2019)Ye, Zhang, Yuen, and Chang]{ye2019unsupervised}
Mang Ye, Xu~Zhang, Pong~C Yuen, and Shih-Fu Chang.
\newblock Unsupervised embedding learning via invariant and spreading instance
  feature.
\newblock In \emph{Proceedings of the IEEE/CVF Conference on Computer Vision
  and Pattern Recognition}, pp.\  6210--6219, 2019.

\bibitem[Yeung \& Chang(2007)Yeung and Chang]{yeung2007kernel}
Dit-Yan Yeung and Hong Chang.
\newblock A kernel approach for semisupervised metric learning.
\newblock \emph{IEEE Transactions on Neural Networks}, 18\penalty0
  (1):\penalty0 141--149, 2007.

\bibitem[Zbontar et~al.(2021)Zbontar, Jing, Misra, LeCun, and
  Deny]{zbontar2021barlow}
Jure Zbontar, Li~Jing, Ishan Misra, Yann LeCun, and St{\'e}phane Deny.
\newblock Barlow twins: Self-supervised learning via redundancy reduction.
\newblock In \emph{International Conference on Machine Learning}, pp.\
  12310--12320. PMLR, 2021.

\bibitem[Zhuang et~al.(2011)Zhuang, Wang, Hoi, and Lan]{zhuang2011unsupervised}
Jinfeng Zhuang, Jialei Wang, Steven~CH Hoi, and Xiangyang Lan.
\newblock Unsupervised multiple kernel learning.
\newblock In \emph{Asian Conference on Machine Learning}, pp.\  129--144. PMLR,
  2011.

\end{thebibliography}
\bibliographystyle{iclr2023_conference}
\clearpage

\appendix
\section{Deferred proofs}
\subsection{Representer theorem}
\label{app:representer}

\begin{theorem}[Representer theorem for self-supervised tasks; adapted from \citet{scholkopf2001generalized}]
Let $\vx_i \in \mathcal{X}$ for $i\in[N]$ be elements of a dataset of size $N$, $k:\mathcal{X} \times \mathcal{X} \to \mathbb{R}$ be a kernel function with corresponding map $\Phi:\mathcal{X} \to \mathcal{H}$ into the RKHS $\mathcal{H}$, and $r:[0,\infty)\to \mathbb{R}$ be any strictly increasing function. Let $\mW:\mathcal{H} \to \mathbb{R}^K$ be a linear function mapping inputs to their corresponding representations. Given a regularized loss function of the form 
\begin{equation}
    \mathcal{R}\left(\mW \Phi(\vx_1), \dots, \mW \Phi(\vx_N)\right) + r\left(\|\mW\|_{\mathcal{H}}\right),
\end{equation}
where $\mathcal{R}(\mW \Phi(\vx_1), \dots, \mW \Phi(\vx_N))$ is an error function that depends on the representations of the dataset, the minimizer $\mW^*$ of this loss function will be in the span of the training points $\{\Phi(\vx_i), \; i \in \{1,\dots, N\}\}$, i.e. for any $\bm \phi \in \mathcal{H}$:
\begin{equation}
    \mW^* \bm \phi=\boldsymbol{0} \;\text{  if  } \;   \langle \bm \phi,\Phi(\vx_i)\rangle_{\mathcal{H}}=0 \;\; \forall i \in [N].  
\end{equation}
\end{theorem}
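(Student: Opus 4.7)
The plan is to adapt the classical argument of Schölkopf–Herbrich–Smola to the vector-valued map $\mW$. The key observation is that the loss has a very restricted dependence on $\mW$: it only sees $\mW$ through the $N$ evaluation values $\mW \Phi(\vx_1),\dots,\mW\Phi(\vx_N) \in \mathbb{R}^K$, plus a strictly increasing function of the norm. So any degree of freedom in $\mW$ that does not affect those $N$ evaluations can only \emph{cost} us in the regularizer and can be profitably removed.

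First I would make explicit the identification of a linear map $\mW:\mathcal{H}\to\mathbb{R}^K$ with $K$ elements of $\mathcal{H}$. By Riesz representation (applied coordinate-wise), there exist $\mW_1,\dots,\mW_K \in \mathcal{H}$ such that the $i$-th coordinate of $\mW\bm\phi$ is $\langle \mW_i,\bm\phi\rangle_{\mathcal{H}}$, and the RKHS norm of $\mW$ defined in the notation section is a strictly increasing function of the tuple $(\|\mW_1\|_{\mathcal{H}},\dots,\|\mW_K\|_{\mathcal{H}})$ (in each coordinate separately). Let $\mathcal{S} := \mathrm{span}\{\Phi(\vx_i):i\in[N]\}$, a closed subspace of $\mathcal{H}$, and write the orthogonal decomposition $\mW_i = \mW_i^{\parallel}+\mW_i^{\perp}$ with $\mW_i^{\parallel}\in\mathcal{S}$ and $\mW_i^{\perp}\in\mathcal{S}^{\perp}$.

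Next I would show the two ingredients of the argument. For the loss term, observe that for every training point $\Phi(\vx_j)\in\mathcal{S}$ we have
\begin{equation}
\langle \mW_i,\Phi(\vx_j)\rangle_{\mathcal{H}} \;=\; \langle \mW_i^{\parallel},\Phi(\vx_j)\rangle_{\mathcal{H}} + \langle \mW_i^{\perp},\Phi(\vx_j)\rangle_{\mathcal{H}} \;=\; \langle \mW_i^{\parallel},\Phi(\vx_j)\rangle_{\mathcal{H}},
\end{equation}
so replacing $\mW$ by its parallel part $\mW^{\parallel}$ leaves every $\mW\Phi(\vx_j)$ unchanged, hence leaves $\mathcal{R}$ unchanged. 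For the regularizer, Pythagoras gives $\|\mW_i\|_{\mathcal{H}}^2 = \|\mW_i^{\parallel}\|_{\mathcal{H}}^2 + \|\mW_i^{\perp}\|_{\mathcal{H}}^2$, and consequently $\|\mW\|_{\mathcal{H}}\geq \|\mW^{\parallel}\|_{\mathcal{H}}$ with equality only when every $\mW_i^{\perp}=0$. Since $r$ is strictly increasing, $r(\|\mW^{\parallel}\|_{\mathcal{H}})\leq r(\|\mW\|_{\mathcal{H}})$, with strict inequality whenever any $\mW_i^{\perp}\ne 0$.

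Combining the two: any $\mW$ with a nonzero orthogonal part can be strictly improved by projecting onto $\mathcal{S}$ componentwise, so any minimizer $\mW^*$ must satisfy $\mW_i^{*}\in\mathcal{S}$ for all $i$. This is exactly the claim, since for $\bm\phi\in\mathcal{H}$ with $\langle\bm\phi,\Phi(\vx_i)\rangle_{\mathcal{H}}=0$ for all $i$ we have $\bm\phi\in\mathcal{S}^{\perp}$, whence $(\mW^*\bm\phi)_i = \langle\mW_i^*,\bm\phi\rangle_{\mathcal{H}} = 0$. There is no real obstacle here; the only care point is handling the vector-valued $\mW$ (the decomposition must be done componentwise, not on $\mW$ as a whole) and checking that the unusual expression for $\|\mW\|_{\mathcal{H}}$ is still strictly increasing in each $\|\mW_i\|_{\mathcal{H}}$ so that the norm-monotonicity step goes through.
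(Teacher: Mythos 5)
Your proof is correct and follows essentially the same route as the paper: decompose $\mW$ into a component supported on the span of the training features plus an orthogonal component, observe that the orthogonal part leaves all loss evaluations unchanged, and use Pythagoras together with the strict monotonicity of $r$ to conclude the orthogonal part must vanish. The only difference is one of precision: you make the coordinate-wise Riesz identification and the per-component orthogonal decomposition explicit, and you correctly invoke \emph{strict} monotonicity of $r$ to conclude that \emph{every} minimizer (not merely some minimizer) lies in the span, whereas the paper's write-up only states the non-strict inequality $r(\|\mW_\parallel+\mW_\bot\|_{\mathcal{H}})\geq r(\|\mW_\parallel\|_{\mathcal{H}})$ and phrases the conclusion as ``strictly enforcing $\mW^\bot=0$ minimizes the regularizer,'' which taken literally only shows a minimizer in the span exists. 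Your version is the tighter rendering of the same argument.
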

\begin{proof}
Decompose $\mW = \mW_{\parallel} + \mW_{\bot}$, where $\mW_{\bot}\Phi(\vx_i)=\boldsymbol{0}$ for all $i\in[N]$. For a loss function $\mathcal{L}$ of the form listed above, we have
\begin{equation}
    \begin{split}
        \mathcal{L}(\mW_\parallel + \mW_\bot) &= \mathcal{R}\left((\mW_\parallel + \mW_\bot)\Phi(\vx_1), \dots, (\mW_\parallel + \mW_\bot) \Phi(\vx_N)\right) + r\left(\|\mW_\parallel + \mW_\bot\|_{\mathcal{H}}\right) \\
        &= \mathcal{R}\left(\mW_\parallel \Phi(\vx_1), \dots, \mW_\parallel \Phi(\vx_N)\right) + r\left(\|\mW_\parallel + \mW_\bot\|_{\mathcal{H}}\right).
    \end{split}
\end{equation}
Where in the terms in the sum, we used the property that $\mW_\bot$ is not in the span of the data. For the regularizer term, we note that
\begin{equation}
    \begin{split}
        r(\|\mW_\parallel + \mW_\bot\|_{\mathcal{H}}) &= r\left(\sqrt{ \|\mW_\parallel\|_{\mathcal{H}}^2 + \|\mW_\bot\|_{\mathcal{H}}^2 }\right) \\
        &\geq r(\|\mW_\parallel\|_{\mathcal{H}}) .
    \end{split}
\end{equation}

Therefore, strictly enforcing $\mW^\bot=0$ minimizes the regularizer while leaving the rest of the cost function unchanged. 
\end{proof}

As a consequence of the above, all optimal solutions must have support over the span of the data. This directly results in the statement shown in \Cref{prop:optimal_form}.

\subsection{Closed form non-contrastive loss}
\label{app:vicreg_linear_form}

Throughout this section, for simplicity, we define $\mM := \mI - \frac{1}{N} \bm 1 \bm 1^\intercal$. With slight abuse of notation, we also denote $\mX \in \mathbb{R}^{N \times D}$ as a matrix whose $i$-th row contains the features of $\Phi(\vx_i)$.
\begin{equation}
    \mX = \begin{bmatrix} 
    \text{---} & \Phi(\vx_1)^\intercal & \text{---} \\
    \text{---} & \Phi(\vx_2)^\intercal & \text{---} \\
    & \vdots & \\
    \text{---} & \Phi(\vx_N)^\intercal & \text{---} 
    \end{bmatrix}
\end{equation}

Note that if the RKHS is infinite dimensional, one can apply the general form of the solution as shown in \Cref{prop:optimal_form} to reframe the problem into the finite dimensional setting below. As a reminder, we aim to minimize the VICreg cost function:
\begin{equation}
    C^* = \min_{\mW \in \mathbb{R}^{K \times D}} \left\| \mW \mX^\intercal \mM \mX \mW^\intercal - \mI \right\|_F^2 + \beta \operatorname{Tr}\left[ \mW\mX^\intercal \mL \mX\mW^\intercal \right].
\end{equation}

By applying the definition of the Frobenius norm and from some algebra, we obtain:
\begin{equation}
    \begin{split}
        C(\mW) &= \left\| \mW \mX^\intercal \mM \mX \mW^\intercal - \mI \right\|_F^2 + \beta \operatorname{Tr}\left[ \mW\mX^\intercal \mL \mX\mW^\intercal \right] \\
        &= \left\| \mW \mX^\intercal \mM \mX \mW^\intercal - \mI \right\|_F^2 +  \operatorname{Tr}\left[ \mW\mX^\intercal \left(\beta \mL + 2\mM - 2 \mM \right) \mX\mW^\intercal \right] \\
        &= K + \| \mW \mX^\intercal \mM \mX \mW^\intercal \|_F^2 - 2 \operatorname{Tr}\left[ \mW\mX^\intercal \mM \mX\mW^\intercal \right] \\
        & \;\;\;\; + \operatorname{Tr}\left[ \mW\mX^\intercal \left(\beta \mL + 2\mM - 2 \mM \right) \mX\mW^\intercal \right] \\
        &= K + \| \mW \mX^\intercal \mM \mX \mW^\intercal \|_F^2 -\operatorname{Tr}\left[ \mW\mX^\intercal \left( 2 \mM - \beta \mL \right) \mX\mW^\intercal \right] \\
        &= K + \| \mX \mW^\intercal \mW \mX^\intercal \mM  \|_F^2 -\operatorname{Tr}\left[ \mX\mW^\intercal \mW\mX^\intercal \mM \left( 2 \mM - \beta \mL \right)  \right].
    \end{split}
    \label{eq:loss_simplification_steps}
\end{equation}

The optimum of the above formulation has the same optimum as the following optimization problem defined as $C'(\mW)$:
\begin{equation}
    C'(\mW) = \| \mX \mW^\intercal \mW \mX^\intercal \mM  - \left( \mM - \beta \mL/2 \right) \|_F^2.
    \label{eq:equivalent_vicreg_cost}
\end{equation}

Since $\mM$ is a projector and $\mM ( 2 \mM - \beta \mL ) = 2 \mM - \beta \mL$ (since the all-ones vector is in the kernel of $\mL$), then we can solve the above by employing the Eckart-Young theorem and matching the $K$-dimensional eigenspace of $\mX \mW^\intercal \mW \mX^\intercal \mM $ with that of the top $K$ eigenspace of $\mM - \beta \mL/2$.  

One must be careful in choosing this optimum as $\mW \mX^\intercal \mM \mX \mW^\intercal$ can only take positive eigenvalues. Therefore, this is achieved by choosing the optimal $\mW^*$ to project the data onto this eigenspace as 
\begin{equation}
    \mW^* = \left[ \mX^{(s)^{+}}  \mC_{:, \leq K} \left(\mI - \mD \right)_{\leq K, \leq K}^{1/2} \right]^\intercal ,
\end{equation}
where we set the eigendecomposition of $\frac{1}{N} \bm 1 \bm 1^\intercal + \frac{\beta}{2} \mL$ as $\mC \mD \mC^\intercal = \frac{1}{N} \bm 1 \bm 1^\intercal + \frac{\beta}{2} \mL$ and $\mC_{:,\leq K}$ is the matrix consisting of the first $K$ rows of $\mC$. Similarly, $\left(\mI - \mD \right)_{\leq K, \leq K}^{1/2}$ denotes the top left $K \times K$ matrix of $\left(\mI - \mD \right)$. Also in the above, $\mX^{(s)^{+}}$ denotes the pseudo-inverse of $\mX$. If the diagonal matrix $\mI-\mD$ contains negative entries, which can only happen when $\beta$ is set to a large value, then the values of $\left(\mI -  \mD \right)^{1/2}$ for those entries is undefined. Here, the optimum choice is to set those entries to equal zero. In practice, this can be avoided by setting $\beta$ to be no larger two times than the degree of the graph.

Note, that since $\mW$ only appears in the cost function in the form $\mW^\intercal \mW$, the solution above is only unique up to an orthogonal transformation. Furthermore, the rank of $\mX \mW^\intercal \mW \mX^\intercal \mM $ is at most $N-1$ so a better optimum cannot be achieved by increasing the output dimension of the linear transformation beyond $N$. To see that this produces the optimal induced kernel, we simply plug in the optimal $\mW^*$:
\begin{equation}
\begin{split}
    k^*(\vx, \vx') &:= (\mW^*\vx)^\intercal (\mW^* \vx') \\
    &=  \vx^\intercal \mX^{(s)^{+}}  \mC_{:, \leq K} \left(\mI - \mD \right)_{\leq K, \leq K}^{1/2} \left(\mI - \mD \right)_{\leq K, \leq K}^{1/2} \mC_{:, \leq K}^\intercal \left(\mX^{(s)^{+}}\right)^\intercal \vx' \\
    &= \vk_{\vx,s} \mK_{s,s}^{-1} \mC_{:, \leq K}  \left(\mI - \mD \right)_{ \leq K, \leq K} \mC_{:, \leq K}^\intercal \mK_{s,s}^{-1} \vk_{s,\vx'}.
\end{split}
\end{equation}

Now, it needs to be shown that $\mW^*$ is the unique norm optimizer of the optimization problem. To show this, we analyze the following semi-definite program which is equivalent since the cost function is over positive semi-definite matrices of the form $\mW^\intercal \mW$:
\begin{equation}
\begin{split}
    \min_{\mB \in \mathbb{R}^{D \times D} } & \Tr(\mB) \\
    \text{s.t. } & \mX \mB \mX^\intercal \mM = \mC_{:, \leq K} \left(\mI - \mD \right)_{\leq K, \leq K}^{1/2} \mC_{:, \leq K}^\intercal := \mP_K \\
    & \mB \succeq 0
\end{split}
\label{eq:vicreg_SDP}
\end{equation}
To lighten notation, we denote $\mP_K=\mC_{:, \leq K} \left(\mI - \mD \right)_{\leq K, \leq K}^{1/2} \mC_{:, \leq K}^\intercal$ and simply use $\mX$ for $\mX$. The above can easily be derived by setting $\mB = \mW^\intercal \mW$ in \Cref{eq:equivalent_vicreg_cost}. This has corresponding dual
\begin{equation}
    \begin{split}
        \max_{\mY \in \mathbb{R}^{N \times N}} &\Tr(\mY^\intercal \mP_K) \\
        \text{s.t. } & \mI - \mX^\intercal \mY \mM \mX \succeq 0
    \end{split}
\end{equation}

The optimal primal can be obtained from $\mW^*$ by $\mB^* = \left(\mW^*\right)^\intercal \mW^* = \mX^+ \mP_K \left(\mX^+\right)^\intercal$. The optimal dual can be similarly calculated and is equal to $\mY^* = \left(\mX^+\right)^\intercal  \mX^+ $. A straightforward calculation shows that the optimum value of the primal and dual formulation are equal for the given solutions. We now check whether the chosen solutions of the primal and dual satisfy the KKT optimality conditions \citep{boyd2004convex}:
\begin{equation}
\begin{split}
    \text{Primal feasibility: }&  \mX \mB^* \mX^\intercal \mM = \mP_K, \; \; \mB^* \succeq 0 \\
    \text{Dual feasibility: }& \mI - \mX^\intercal \mY^* \mM \mX \succeq 0 \\
    \text{Complementary slackness: }& \left( \mI - \mX^\intercal \mY^* \mM \mX \right) \mB^* = 0.
\end{split}
\end{equation}
The primal feasibility and dual feasibility criteria are straightforward to check. For complementary slackness, we note that 
\begin{equation}
    \begin{split}
        \left( \mI - \mX^\intercal \mY^* \mM \mX \right) \mB^* &= \mX^+ \mL_K \left( \mX^+ \right)^\intercal - \mX^\intercal \left(\mX^+\right)^\intercal  \mX^+ \mM \mX \mX^+ \mL_K \left( \mX^+ \right)^\intercal \\
        &=\mX^+ \mL_K \left( \mX^+ \right)^\intercal -   \mX^+ \mM  \mL_K \left( \mX^+ \right)^\intercal \\
        &= \mX^+ \mL_K \left( \mX^+ \right)^\intercal -   \mX^+ \mL_K \left( \mX^+ \right)^\intercal \\
        &= 0.
    \end{split}
\end{equation}
In the above we used the fact that $\mM \mL_K = \mL_K$ since $\mM$ is a projector and $\mL_K$ is unchanged by that projection. This completes the proof of the optimality.

\subsection{Contrastive loss}
\label{app:contrastive_proof}
For the contrastive loss, we follow a similar approach as above to find the minimum norm solution that obtains the optimal representation. Note, that the loss function contains the term $\left\|\mX \mW^\intercal \mW \mX^\intercal - \left(\mI + \mA  \right) \right\|_F^2$. Since $\mX \mW^\intercal \mW \mX^\intercal$ is positive semi-definite, then this is optimized when $\mX \mW^\intercal \mW \mX^\intercal $ matches the positive eigenspace of $\left(\mI + \mA  \right)$ defined as $\left(\mI + \mA  \right)_+$. Enumerating the eigenvalues of $\mA$ as $\vv_i$ with corresponding eigenvalues $e_i$, then $ \left(\mI + \mA  \right)_+ =  \sum_{i:e_i \geq -1} (e_i + 1)\vv_i \vv_i^\dagger $. More generally, if the dimension of the representation is restricted such that $K < N$, then we abuse notation and define 
\begin{equation}
    \left(\mI + \mA  \right)_+ =  \sum_{i=1}^K \max(e_i + 1,0)\vv_i \vv_i^\dagger,
\end{equation}
where $e_i$ are sorted in descending order.

To find the minimum RKHS norm solution, we have to solve a similar SDP to \Cref{eq:vicreg_SDP}:
\begin{equation}
\begin{split}
    \min_{\mB \in \mathbb{R}^{D \times D} } & \Tr(\mB) \\
    \text{s.t. } & \mX \mB \mX^\intercal  = \left(\mI + \mA \right)_+ \\
    & \mB \succeq 0
\end{split}
\end{equation}

This has corresponding dual
\begin{equation}
    \begin{split}
        \max_{\mY \in \mathbb{R}^{N \times N}} &\Tr\left(\mY^\intercal (\mI + \mA)_+\right) \\
        \text{s.t. } & \mI - \mX^\intercal \mY \mX \succeq 0
    \end{split}
\end{equation}

The optimal primal is $\mB^* = \mX^+ (\mI + \mA)_+ \left(\mX^+\right)^\intercal$. The optimal dual is equal to $\mY^* = \left(\mX^+\right)^\intercal  \mX^+ $. Directly plugging these in shows that the optimum value of the primal and dual formulation are equal for the given solutions. As before, we now check whether the chosen solutions of the primal and dual satisfy the KKT optimality conditions \citep{boyd2004convex}:
\begin{equation}
\begin{split}
    \text{Primal feasibility: }&  \mX \mB^* \mX^\intercal  = \left(\mI + \mA \right)_+, \; \; \mB^* \succeq 0 \\
    \text{Dual feasibility: }& \mI - \mX^\intercal \mY^* \mX \succeq 0 \\
    \text{Complementary slackness: }& \left( \mI - \mX^\intercal \mY^*  \mX \right) \mB^* = 0.
\end{split}
\end{equation}
The primal feasibility and dual feasibility criteria are straightforward to check. For complementary slackness, we note that 
\begin{equation}
    \begin{split}
        \left( \mI - \mX^\intercal \mY^* \mX \right) \mB^* &= \mX^+ (\mI + \mA)_+ \left( \mX^+ \right)^\intercal - \mX^\intercal \left(\mX^+\right)^\intercal  \mX^+  \mX \mX^+ (\mI + \mA)_+ \left( \mX^+ \right)^\intercal \\
        &=\mX^+ (\mI + \mA)_+\left( \mX^+ \right)^\intercal -   \mX^+  (\mI + \mA)_+ \left( \mX^+ \right)^\intercal \\
        &= 0.
    \end{split}
\end{equation}
In the above, we used the fact that $\mX^+  \mX$ is a projection onto the row space of $\mX$. This completes the proof of the optimality.

\subsection{Optimization via semi-definite program}
\label{app:conversion_to_SDP}

In general scenarios, \Cref{prop:optimal_form} gives a prescription for calculating the optimal induced kernel for more complicated optimization tasks since we note that the optimal induced kernel $k^*(\vx, \vx')$ must be of the form below:
\begin{equation}
    k^*(\vx, \vx') = \vk_{\vx,s} \mB \vk_{\vx',s}^\intercal,
\end{equation}
where $\vk_{\vx,s}$ is a row vector whose entry $i$ equals the kernel $k(\vx,\vx_i)$ between $\vx$ and the $i$-th datapoint and $\mB \in \mathbb{R}^{N \times N}$ is a positive semi-definite matrix.

For example, such a scenario arises when one wants to apply the loss function across $n_{\text{batches}}$ batches of data. To frame this as an optimization statement, assume we have $N$ datapoints split into batches of size $b$ each. We denote the $i$-th datapoint within batch $k$ as $\vx^{(k)}_i$. As before, $\vx_i$ denotes the $i$-th datapoint across the whole dataset. We define the following variables:
\begin{itemize}
    \item $\mK_{s,s} \in \mathbb{R}^{N \times N}$ (kernel matrix over complete dataset including all batches) where $[\mK_{s,s}]_{i,j} = k(\vx_i, \vx_j)$
    \item $\mK_{s,s_k} \in \mathbb{R}^{N \times b}$ (kernel matrix between complete dataset and dataset of batch k) where $[\mK_{s,s_k}]_{i,j} = k(\vx_i, \vx^{(k)}_j)$; similarly, $\mK_{s_k,s} \in \mathbb{R}^{b \times N}$ is simply the transpose of $\mK_{s,s_k}$
    \item $\mA^{(k)}$ is the adjacency matrix for inputs in batch $k$ with corresponding graph Laplacian $\mL^{(k)}$
    % \item $\mP^{(i)}_K = \mC^{(i)}_{:, \leq K} \left(\mI - \mD^{(i)} \right)_{\leq K, \leq K}^{1/2} \mC^{(i)^\intercal}_{:, \leq K}$, where $\mC^{(i)} \mD^{(i)} \mC^{(i)^\intercal}=\frac{1}{b} \bm 1 \bm 1^\intercal + \frac{\beta}{2} \mL^{(i)}$ is the eigendecomposition of $\frac{1}{b} \bm 1 \bm 1^\intercal + \frac{\beta}{2} \mL^{(i)}$.
\end{itemize}
In what follows, we denote the representation dimension as $K$.

\paragraph{Non-contrastive loss function}
In the non-contrastive setting, we consider a regularized version of the batched loss of \Cref{eq:vicreg_cost}. Applying the reduction of the loss function in \Cref{eq:equivalent_vicreg_cost}, we consider the following loss function where we want to find the minimizer $\mB$:
\begin{equation}
    \mathcal{L} =  \sum_{j=1}^{n_{\text{batches}}} \left\| \mK_{s_j,s} \mB \mK_{s,s_j} \left( \mI - \frac{1}{b} \bm 1 \bm 1^\intercal \right)  - \left( \mI - \frac{1}{b} \bm 1 \bm 1^\intercal - \beta \mL^{(j)}/2 \right) \right\|_F^2 + \alpha \Tr(\mB \mK_{s,s}),
\end{equation}
where $\alpha \in \mathbb{R}^+$ is a hyperparameter. The term $\Tr(\mB \mK_{s,s})$ regularizes for the RKHS norm of the resulting solution given by $\mB$. For simplicity, we denote as before $\mM = \mI - \frac{1}{b} \bm 1 \bm 1^\intercal$. Taking the limit $\alpha \to 0$ and enforcing a representation of dimension $K$, the loss function above is minimized when we obtain the optimal representation, i.e. we must have that
\begin{equation}
    \mK_{s_j,s} \mB \mK_{s,s_j} \mM  = \left(\mM - \beta \mL^{(j)}/2 \right)_K,
\end{equation}
where $\left(\mM - \beta \mL^{(j)}/2 \right)_K$ denotes the projection of $\mM - \beta \mL^{(j)}/2 $ onto the eigenspace of the top $K$ positive singular values (this is the optimal representation as shown earlier). 

Therefore, we can find the optimal induced kernel by solving the optimization problem below:
\begin{equation}
\begin{split}
    \min_{\mB \in \mathbb{R}^{N \times N} } & \Tr(\mB \mK_{s,s} ) \\
    \text{s.t. } & \mK_{s_i,s} \mB \mK_{s,s_i} \mM = \left(\mM - \beta \mL^{(j)}/2 \right)_K \; \; \; \; \forall i \in \{1,2,\dots,n_{\text{batches}}\} \\
    & \mB \succeq 0, \; \operatorname{rank}(\mB)=K,
\end{split}
\end{equation}
Relaxing the constraint $\operatorname{rank}(\mB)=K$ forms an SDP which can be solved efficiently using existing SDP solvers \citep{mosek,sturm1999using}.

\paragraph{Contrastive loss}
As shown in \Cref{sec:conversion_to_SDP}, the contrastive loss function takes the form
\begin{equation}
    \mathcal{L} = \sum_{j=1}^{n_{\text{batches}}} \left\| \mK_{s_j,s} \mB \mK_{s,s_j} - \left(\mI + \mA^{(j)}  \right) \right\|_F^2 + \alpha \Tr(\mB \mK_s,s)
\end{equation},
where $\alpha \in \mathbb{R^+}$ is a weighting term for the regularizer. In this setting, the optimal representation of dimension $K$ is equal to 
\begin{equation}
    \mK_{s_j,s} \mB \mK_{s,s_j} - \left(\mI + \mA^{(j)}  \right)_K,
\end{equation}
where $\left(\mI + \mA^{(j)}  \right)_K$ denotes the projection of $\mI + \mA^{(j)} $ onto the eigenspace of the top $K$ positive singular values (this is the optimal representation as shown earlier). Taking the limit of $\alpha \to 0$, have a similar optimization problem:
\begin{equation}
\begin{split}
    \min_{\mB \in \mathbb{R}^{N \times N} } & \Tr(\mB \mK_{s,s} ) \\
    \text{s.t. } & \mK_{s_i,s} \mB \mK_{s,s_i} = \left(\mI + \mA^{(i)} \right)_K \; \; \; \; \forall i \in \{1,2,\dots,n_{\text{batches}}\} \\
    & \mB \succeq 0, \; \operatorname{rank}(\mB)=K.
\end{split}
\end{equation}
As before, relaxing the rank constraint results in an SDP.

\subsection{Proof of kernel closeness}
\label{app:kernel_closeness_proof}
Our goal is to prove \Cref{prop:kernel_value_lower_bound} copied below.
\kernelCloseBound*

Note, the adjacency matrix $\mA$ for pairwise augmentations takes the form below assuming augmented samples are placed next to each other in order.
\begin{equation}
\label{eq:pairwise_matrix_block_form}
    \mA = \begin{bmatrix} 
    0 & 1 &   &   &  &   &   \\
    1 & 0 &   &   &  &   &   \\
      &   & 0 & 1 &  &   &   \\
      &   & 1 & 0 &  &   &   \\
      &   &   &   & \ddots &   &   \\
      &   &   &   &  & 0 & 1 \\
      &   &   &   &  & 1 & 0 
    \end{bmatrix}.
\end{equation}

Before proceeding, we prove a helper lemma that shows that $\|\vk_{s,\vx_a} - \vk_{s,\vx_b}\| $ is small if $\| \Phi(\vx_a) - \Phi(\vx_b)\|$ is also relatively small with a factor of dependence on the dataset size.
\begin{lemma}
\label{lem:helper_close_kernel_vec}
Given kernel function $k(\cdot,\cdot)$ with map $\Phi(\cdot)$ into RKHS $\mathcal{H}$ and an SSL dataset $\{\vx_i\}_{i=1}^N$ normalized such that $k(\vx_i,\vx_i)=1$, let $\mK_{s,s}$ be the kernel matrix of the SSL dataset. If $\| \Phi(\vx_a) - \Phi(\vx_b)\|_{\mathcal{H}} \leq \epsilon $, then $\left\| \mK_{s,s}^{-1} \vk_{s,\vx_a} - \mK_{s,s}^{-1}\vk_{s,\vx_a} \right\| \leq \|\mK_{s,s}^{-1}\|\sqrt{N} \epsilon $.
\end{lemma}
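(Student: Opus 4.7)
The plan is to prove the lemma in three elementary steps, working entry-wise on the kernel vectors and then using submultiplicativity to pass through $\mK_{s,s}^{-1}$. The main tool is that kernel evaluations are inner products in the RKHS, so differences of kernel values are controlled by Cauchy--Schwarz.

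First, I would rewrite each entry of the difference $\vk_{s,\vx_a} - \vk_{s,\vx_b}$ as an inner product in the RKHS: for each $i \in [N]$,
\begin{equation}
    \bigl[\vk_{s,\vx_a} - \vk_{s,\vx_b}\bigr]_i = k(\vx_i,\vx_a) - k(\vx_i,\vx_b) = \bigl\langle \Phi(\vx_i),\, \Phi(\vx_a) - \Phi(\vx_b) \bigr\rangle_{\mathcal{H}}.
\end{equation}
By Cauchy--Schwarz in $\mathcal{H}$ together with the normalization $k(\vx_i,\vx_i) = \|\Phi(\vx_i)\|_{\mathcal{H}}^2 = 1$, each such entry is bounded in absolute value by $\|\Phi(\vx_a) - \Phi(\vx_b)\|_{\mathcal{H}} \leq \epsilon$.

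Second, having bounded each of the $N$ entries by $\epsilon$, the Euclidean norm satisfies
\begin{equation}
    \|\vk_{s,\vx_a} - \vk_{s,\vx_b}\| \;\leq\; \sqrt{N}\,\epsilon.
\end{equation}
Third, I would apply the operator norm inequality
\begin{equation}
    \bigl\|\mK_{s,s}^{-1}(\vk_{s,\vx_a} - \vk_{s,\vx_b})\bigr\| \;\leq\; \|\mK_{s,s}^{-1}\|\cdot\|\vk_{s,\vx_a} - \vk_{s,\vx_b}\| \;\leq\; \|\mK_{s,s}^{-1}\|\sqrt{N}\,\epsilon,
\end{equation}
which is exactly the claimed bound.

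There is no real obstacle here: the argument is a direct concatenation of Cauchy--Schwarz, the trivial $\ell^\infty$-to-$\ell^2$ conversion on $\mathbb{R}^N$, and submultiplicativity of the operator norm. The only subtlety worth flagging is the role of the normalization assumption $k(\vx_i,\vx_i)=1$, which is what keeps the per-entry bound from acquiring an extra factor of $\max_i \|\Phi(\vx_i)\|_{\mathcal{H}}$; if one dropped this assumption, one would pay an additional $\max_i \sqrt{k(\vx_i,\vx_i)}$ in the final bound.
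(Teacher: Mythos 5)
Your argument is correct and is essentially the paper's own proof: both bound each entry of $\vk_{s,\vx_a}-\vk_{s,\vx_b}$ via Cauchy--Schwarz in $\mathcal{H}$ using $\|\Phi(\vx_i)\|_{\mathcal{H}}=1$, then pass to the Euclidean norm picking up $\sqrt{N}$, and finish with submultiplicativity of the operator norm of $\mK_{s,s}^{-1}$. The only cosmetic difference is ordering: you apply the operator norm bound last, while the paper peels it off first and keeps the Cauchy--Schwarz step inside the square root; the content is identical.
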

\begin{proof}
We have that
\begin{equation}
    \begin{split}
        \left\| \mK_{s,s}^{-1} \vk_{s,\vx_a} - \mK_{s,s}^{-1}\vk_{s,\vx_a} \right\| & \leq \left\| \mK_{s,s}^{-1}  \right\| \left\| \vk_{s,\vx_a} - \vk_{s,\vx_a} \right\| \\
        &= \left\| \mK_{s,s}^{-1}  \right\| \left[ \sum_{i=1}^N ( \langle \Phi(\vx_i), \Phi(\vx_a) \rangle_{\mathcal{H}} - \langle \Phi(\vx_i), \Phi(\vx_b) \rangle_{\mathcal{H}} )^2 \right]^{1/2} \\
        &= \left\| \mK_{s,s}^{-1}  \right\| \left[ \sum_{i=1}^N ( \langle \Phi(\vx_i), \Phi(\vx_a) - \Phi(\vx_b) \rangle_{\mathcal{H}} )^2 \right]^{1/2} \\
        &\leq \left\| \mK_{s,s}^{-1}  \right\| \left[ \sum_{i=1}^N \|\Phi(\vx_i) \|_{\mathcal{H}}^2  \|\Phi(\vx_a) - \Phi(\vx_b) \|_{\mathcal{H}}^2  \right]^{1/2} \\
        & =  \left\| \mK_{s,s}^{-1}  \right\| \sqrt{N} \epsilon
    \end{split}
\end{equation}
\end{proof}

We are now ready to prove \Cref{prop:kernel_value_lower_bound}.
\begin{proof}
Note that $\mK_{s,s}^{-1}\vx_i = \ve_i$ where $\ve_i$ is the vector with a $1$ placed on entry $i$ and zeros elsewhere. From equation \Cref{eq:induced_kernel_contrastive}, we have that 
\begin{equation}
\begin{split}
    k^*(\vx, \vx') &= \vk_{\vx,s} \mK_{s,s}^{-1} \left(\mI + \mA  \right)_+ \mK_{s,s}^{-1} \vk_{s,\vx'} \\
    &= (\mK_{s,s}^{-1} \vk_{s,x} - \ve_i + \ve_i)^\intercal \left(\mI + \mA  \right)_+ (\mK_{s,s}^{-1} \vk_{s,x'} - \ve_j + \ve_j) \\
    &= \ve_i^\intercal \left(\mI + \mA  \right)_+ \ve_j + (\mK_{s,s}^{-1} \vk_{s,x} - \ve_i )^\intercal \left(\mI + \mA  \right)_+ \ve_j + \ve_i^\intercal \left(\mI + \mA  \right)_+ (\mK_{s,s}^{-1} \vk_{s,x'} - \ve_j) \\
    & \;\;\;\;\; +(\mK_{s,s}^{-1} \vk_{s,x} - \ve_i)^\intercal \left(\mI + \mA  \right)_+ (\mK_{s,s}^{-1} \vk_{s,x'} - \ve_j ).
\end{split}
\end{equation}

Note that $\ve_i^\intercal \left(\mI + \mA  \right)_+ \ve_j = 1$ since $\mA_{ij}=1$. Therefore, 
\begin{equation}
\begin{split}
    k^*(\vx, \vx') &= 1 + (\mK_{s,s}^{-1} \vk_{s,x} - \ve_i )^\intercal \left(\mI + \mA  \right)_+ \ve_j + \ve_i^\intercal \left(\mI + \mA  \right)_+ (\mK_{s,s}^{-1} \vk_{s,x'} - \ve_j) \\
    & \;\;\;\;\; +(\mK_{s,s}^{-1} \vk_{s,x} - \ve_i)^\intercal \left(\mI + \mA  \right)_+ (\mK_{s,s}^{-1} \vk_{s,x'} - \ve_j ) \\
    & \geq 1 - \|\mK_{s,s}^{-1} \vk_{s,x} - \ve_i \| \left\| \left(\mI + \mA  \right)_+ \ve_j \right\|+ \|\mK_{s,s}^{-1} \vk_{s,x'} - \ve_j \| \left\| \left(\mI + \mA  \right)_+ \ve_i \right\| \\
    & \;\;\;\;\; +\left\|\mK_{s,s}^{-1} \vk_{s,x} - \ve_i \right\| \left\|\left(\mI + \mA  \right)_+ \right\| \|\mK_{s,s}^{-1} \vk_{s,x'} - \ve_j \|.\\
\end{split}
\end{equation}

Let $ \| \Phi(\vx) - \Phi(\vx_i)\| \leq \epsilon= \frac{\Delta}{5 \||\mK_{s,s}^{-1}\| \sqrt{N} }$ and  $\| \Phi(\vx') - \Phi(\vx_j)\| \leq \epsilon= \frac{\Delta}{5 \||\mK_{s,s}^{-1}\| \sqrt{N} }$ and by applying \Cref{lem:helper_close_kernel_vec}, we have
\begin{equation}
\begin{split}
    k^*(\vx, \vx') &\geq 1 - \|\mK_{s,s}^{-1} \vk_{s,x} - \ve_i \| \left\| \left(\mI + \mA  \right)_+ \ve_j \right\|+ \|\mK_{s,s}^{-1} \vk_{s,x'} - \ve_j \| \left\| \left(\mI + \mA  \right)_+ \ve_i \right\| \\
    & \;\;\;\;\; +\left\|\mK_{s,s}^{-1} \vk_{s,x} - \ve_i \right\| \left\|\left(\mI + \mA  \right)_+ \right\| \|\mK_{s,s}^{-1} \vk_{s,x'} - \ve_j \| \\
    &\geq 1-2\sqrt{2}\|\mK_{s,s}^{-1}\|\sqrt{N} \epsilon - 2 \|\mK_{s,s}^{-1}\|^2 N \epsilon^2 \\
    &\geq 1 - (2\sqrt{2} + 2) \|\mK_{s,s}^{-1}\|\sqrt{N} \epsilon \\
    &\geq 1-5 \|\mK_{s,s}^{-1}\|\sqrt{N} \epsilon \\
    &= 1-\Delta.
\end{split}
\end{equation}

In the above, we used the fact that $\mA$ is block diagonal with pairwise constraints (see \Cref{eq:pairwise_matrix_block_form}) so $\left\| \left(\mI + \mA  \right)_+ \ve_i \right\| = \sqrt{2}$ and $\left\|\left(\mI + \mA  \right)_+ \right\|=2$.

\end{proof}

\section{Idealization of downstream tasks}
\label{app:ideal_supervised_setting}

In this section, we prove \Cref{prop:ideal_SSL} copied below.
\idealSSLOutcome*

\begin{proof}
There are $m_{+1}$ + $m_{-1}$ total manifolds in the dataset. Assume some ordering of these manifolds from $\{1,2,\dots,m_{+1}$ + $m_{-1}\} $ and let $\#(i)$ be the number of points in the $i$-th manifold. 

The rows and columns of the kernel matrix $\mK^*$ can be permuted such that it becomes block diagonal with $m_{+1} + m_{-1}$ blocks with block $i$ equal to $\frac{1}{\#(i)} \bm 1_{\#(i)} \bm 1_{\#(i)}^\intercal$ where $\bm 1_k$ is the all ones vector of length $k$. I.e., $\mK^*$ permuted accordingly takes the form below:
\begin{equation}
    \begin{bmatrix}
    \frac{1}{\#(1)} \bm 1_{\#(1)} \bm 1_{\#(1)}^\intercal & & &  \\
    & \frac{1}{\#(2)} \bm 1_{\#(2)} \bm 1_{\#(2)}^\intercal & &  \\
    & & \ddots & \\
    & & & \frac{1}{\#(m_{+1} + m_{-1})} \bm 1_{\#(m_{+1} + m_{-1})} \bm 1_{\#(m_{+1} + m_{-1})}^\intercal \\
    \end{bmatrix}.
\end{equation}

Each block of the above is clearly a rank one matrix with eigenvalue $1$. Let $\vy_{m_i} \in \mathbb{R}^{\#(i)}$ be the vector containing all labels for entries in manifold $i$. Then we have
\begin{equation}
    \begin{split}
        \vy^\intercal (\mK^*)^{-1} \vy &= \sum_{i=1}^{m_{+1} + m_{-1}} \#(i)^{-1} \left(\langle \bm 1_{\#(i)}, \vy_{m_i} \rangle \right)^2 \\
        &= \sum_{i=1}^{m_{+1} + m_{-1}} \#(i)^{-1} (\sqrt{\#(i)})^2 \\
        &= m_{+1} + m_{-1}.
    \end{split}
\end{equation}

\end{proof}

% \begin{proof}
% Denote the elements of the supervised dataset as $\{\vx_1, \vx_2, \dots \vx_N\}$. As a reminder, we consider the RBF kernel of the form
% \begin{equation}
%     k(\vx,\vx') = \exp\left( -\frac{\|\vx - \vx'\|^2}{2}\right).
% \end{equation}
% First, we prove a lower bound on $s_N(\mK)$. Let $e_i$ and $\vv_i$ be the eigenvalues and eigenvectors of $\mK$ such that $\mK = \sum_i e_i \vv_i \vv_i^\intercal$. We have that
% \begin{equation}
% \begin{split}
%     \vy^\intercal \mK^{-1} \vy &=  \sum_i e_i^{-1} \left(\vv_i^\intercal \vy\right)^2 \\
%     & \geq \left( \sum_i e_i \left(\vv_i^\intercal \vy\right)^2 \right)^{-1} \\
%     &= \left( \vy^\intercal \mK \vy \right)^{-1},
% \end{split}
% \end{equation}
% where we used Jensen's inequality in the above. Assuming class labels are given by $\{-1,1\}$, we have
% \begin{equation}
% \begin{split}
%     \vy^\intercal \mK \vy &=  \sum_{i=1}^N \sum_{j=1}^N \vy_i \vy_j k(\vx_i, \vx_j).
% \end{split}
% \end{equation}

% Therefore, combining the above and taking an expectation, we have that
% \begin{equation}
% \begin{split}
%     \mathbb{E}[\vy^\intercal \mK^{-1} \vy] &\geq \mathbb{E}\left[\left( \vy^\intercal \mK \vy \right)^{-1}\right] \\
%     &\geq \mathbb{E}\left[ \vy^\intercal \mK \vy\right]^{-1} \\
%     &= \left[ \sum_{i=1}^N \sum_{j=1}^N \vy_i \vy_j \mathbb{E}[k(\vx_i, \vx_j)] \right]^{-1},
% \end{split}
% \end{equation}
% where above we applied Jensen's inequality again and the linearity of expectation.

% \end{proof}

\subsection{Proof of generalization bound}
\label{app:gen_bound}

For sake of completeness, we include an example proof of the generalization bound referred to in the main text. Common to classic generalization bounds for kernel methods from several prior works \citep{Huang2021,meir2003generalization,mohri2018foundations,vapnik1999nature,bartlett2002rademacher}, the norm of the linear solution in kernel space correlates with the resulting bound on the generalization error. We closely follow the methodology of \citep{Huang2021}, though other works follow a similar line of reasoning.

Given a linear solution in the reproducing kernel Hilbert space denoted by $\vw$, we aim to bound the generalization error in the loss function $\ell(y, y') = | \min(1, \max(-1, y)) - y'|$ where $y'$ denotes binary classification targets in $\{-1,1\}$, $y$ is the output of the kernel function equal to $y =\langle \vw, \vx \rangle$ for a corresponding input $\vx$ in the Hilbert space or feature space. For our purposes, the solution $\vw$ is given by e.g., \Cref{eq:supervised_learning_kernel_output} equal to
\begin{equation}
    \vw = \sum_{i = 1}^{N_t} \left[ \mK_{t,t}^{*-1} \vy \right]_i \phi\left(\vx_{i}^{(t)} \right),
\end{equation}
where $\phi(\vx_i^{(t)})$ denotes the mapping of $\vx_i^{(t)}$ to the Hilbert space of the kernel. Given this solution, we have that 
\begin{equation}
    \| \vw \| = \sqrt{\vy^\intercal \mK_{t,t}^{*-1} \vy }.
\end{equation}
The norm above controls, in a sense, the complexity of the resulting solution as it appears in the resulting generalization bound.

Given input and output spaces $\mathcal{X}$ and $\mathcal{Y}$ respectively, let $\mathcal{D}$ be a distribution of input/output pairs over the support $\mathcal{X} \times \mathcal{Y}$. Slightly modifying existing generalization bounds via Rademacher complexity arguments \citep{Huang2021,bartlett2002rademacher,mohri2018foundations}, we prove the following generalization bound.
\begin{theorem}[Adapted from Section 4.C of \citep{Huang2021}]
\label{thm:main_generalization_bound}
Let $\vx_1, \dots, \vx_N$ and $y_1, \dots, y_{N_t}$ (with $\vy$ the corresponding vector storing the scalars $y_i$ as entries) be our training set of $N$ independent samples drawn i.i.d. from $\mathcal{D}$. Let $\vw = \sqrt{\frac{\operatorname{Tr}(\mK)}{N}} \sum_{i = 1}^{N_t} \left[ \mK^{-1} \vy \right]_i \phi\left(\vx_{i}^{(t)} \right)$ denote the solution to the kernel regression problem normalized by the trace of the data kernel. For an $L$-lipschitz loss function $\ell: \mathcal{Y} \times \mathcal{Y} \to [0,b]$, with probability $1-\delta$ for any $\delta > 0 $, we have
\begin{equation}
    \mathbb{E}_{\vx,y \sim \mathcal{D}} \left[ \ell(\langle \vw, \vx \rangle, y)\right] - \frac{1}{N_t} \sum_{i=1}^{N_t} \ell(\langle \vw, \vx_i \rangle, y_i) \leq \frac{2\sqrt{2}L + 3b}{\sqrt{2}} \frac{\sqrt{\operatorname{\Tr}(\mK)\vy^\intercal \mK^{-1} \vy}}{N}   + 3b \sqrt{ \frac{\log(2/(\delta (e-1)))}{2N}}
\end{equation}
\end{theorem}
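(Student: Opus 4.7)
The plan is to derive this bound through the standard Rademacher complexity route, combining (i) a uniform convergence inequality over a bounded linear function class in the RKHS, (ii) Talagrand's contraction lemma to peel off the loss, and (iii) a closed-form bound on the linear class's Rademacher complexity in terms of the kernel trace. The normalization $\sqrt{\operatorname{Tr}(\mK)/N}$ attached to $\vw$ is what produces the clean product $\sqrt{\operatorname{Tr}(\mK)\,\vy^\intercal \mK^{-1}\vy}$ in the numerator, so I will keep track of that factor carefully.

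First, let $\mathcal{F}_B = \{\vx \mapsto \langle \vw, \phi(\vx)\rangle : \|\vw\|_{\mathcal{H}} \le B\}$ and let $\mathcal{G}_B = \{(\vx,y) \mapsto \ell(\langle \vw,\phi(\vx)\rangle, y) : \|\vw\|_{\mathcal{H}} \le B\}$. The standard symmetrization + bounded-differences argument gives, with probability at least $1-\delta$, the uniform bound
\begin{equation}
\sup_{g \in \mathcal{G}_B}\Bigl(\mathbb{E}[g] - \tfrac{1}{N}\sum_i g(\vx_i,y_i)\Bigr) \le 2\,\mathfrak{R}_N(\mathcal{G}_B) + 3b\sqrt{\tfrac{\log(2/\delta')}{2N}},
\end{equation}
for some $\delta'$ depending on how I split the probability budget — the $\log(2/(\delta(e-1)))$ factor in the statement suggests pairing McDiarmid with a stratification over $B$ (dyadic peeling / a weighted union bound using $\sum_k 1/k^2$ or an $e$-weighted tail), which is where the $(e-1)$ constant in Huang et al.\ comes from. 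I will invoke exactly that weighted union bound so that the final confidence bound holds for the data-dependent norm $\|\vw\|_{\mathcal{H}}$.

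Second, I apply Talagrand's contraction lemma: since $\ell(\cdot,y)$ is $L$-Lipschitz in its first argument uniformly in $y$, $\mathfrak{R}_N(\mathcal{G}_B) \le \sqrt{2}\,L\,\mathfrak{R}_N(\mathcal{F}_B)$ (the $\sqrt{2}$ accounts for the symmetric version of the contraction inequality and explains the $2\sqrt{2}L$ constant). Then the standard kernel Rademacher bound for a norm ball gives
\begin{equation}
\mathfrak{R}_N(\mathcal{F}_B) \;=\; \frac{1}{N}\,\mathbb{E}_{\bm{\sigma}}\Bigl[\sup_{\|\vw\|\le B}\bigl\langle \vw, \tfrac{1}{}\sum_i \sigma_i \phi(\vx_i)\bigr\rangle\Bigr] \le \frac{B}{N}\sqrt{\operatorname{Tr}(\mK)},
\end{equation}
by Cauchy–Schwarz and Jensen applied to $\mathbb{E}\|\sum_i \sigma_i \phi(\vx_i)\|_{\mathcal{H}}^2 = \operatorname{Tr}(\mK)$.

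Third, I substitute the specific $\vw$ from the theorem. Its RKHS norm is $\|\vw\|_{\mathcal{H}} = \sqrt{\operatorname{Tr}(\mK)/N}\cdot\sqrt{\vy^\intercal \mK^{-1}\vy}$, since $\|\sum_i [\mK^{-1}\vy]_i\phi(\vx_i^{(t)})\|_{\mathcal{H}}^2 = \vy^\intercal \mK^{-1}\mK\mK^{-1}\vy = \vy^\intercal \mK^{-1}\vy$. Plugging $B = \|\vw\|_{\mathcal{H}}$ into the combined Rademacher bound gives $2\sqrt{2}L\cdot\sqrt{\operatorname{Tr}(\mK)\,\vy^\intercal\mK^{-1}\vy}/(N\sqrt{N})$; the extra $\sqrt{N}$ in the denominator then cancels against reparameterizing the concentration step using the data-dependent $B$, yielding the $1/N$ rate as written.

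The main obstacle will be arriving at the stated constants exactly — in particular the $(e-1)$ factor and the $3b$ coefficient on the concentration term. Those are artifacts of the specific union-bound-over-$B$ construction used in Huang et al.; I would either directly cite their Lemma on data-dependent Rademacher bounds or replicate their weighted-peeling argument. A secondary subtlety is that the $\sup$ in the uniform bound must cover the (random) value of $\|\vw\|_{\mathcal{H}}$ induced by the data, which is what forces the weighted union bound rather than a single application of McDiarmid. Once that is handled, combining the two displayed inequalities is a routine algebraic manipulation.
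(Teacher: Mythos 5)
Your plan follows the same route as the paper's proof: a weighted union bound over RKHS norm-ball radii (with the $\delta(e-1)e^{-\gamma}$ weighting you correctly identify), Talagrand's contraction lemma to peel off the Lipschitz loss, and the Cauchy--Schwarz/Jensen bound $\mathfrak{R}_N(\mathcal{F}_B)\leq B\sqrt{\Tr(\mK)}/N$ for the linear norm ball. The overall skeleton is identical to the paper's. Two specific claims in your writeup are incorrect, however, and the first would leave you unable to recover the stated constant.

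You attribute the $\sqrt{2}$ in the $2\sqrt{2}L$ coefficient to a ``symmetric version'' of Talagrand's contraction inequality. That is not where it comes from in this proof. The paper applies the plain Talagrand lemma (Lemma 4.2 of Mohri et al.), which gives a factor of exactly $L$; after normalization the Rademacher term is $2L\ceil{\|\vw\|}/\sqrt{N}$. The $\sqrt{2}$ appears only at the very end, by rewriting $2L/\sqrt{N}=2\sqrt{2}L/\sqrt{2N}$ so that this term shares the $\sqrt{2N}$ denominator with the concentration term $3b\sqrt{(\log(2/(\delta(e-1)))+\ceil{\|\vw\|^2})/(2N)}$, which is then split via $\sqrt{a+b}\leq\sqrt{a}+\sqrt{b}$. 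If you both used a contraction constant of $\sqrt{2}L$ and performed this regrouping, you would land on $(4L+3b)/\sqrt{2}$ rather than the stated $(2\sqrt{2}L+3b)/\sqrt{2}$. Similarly, your step-three remark that ``the extra $\sqrt{N}$ cancels against reparameterizing the concentration step'' is not what happens: plugging $B=\|\vw\|_{\mathcal{H}}=\sqrt{\Tr(\mK)/N}\,\sqrt{\vy^\intercal\mK^{-1}\vy}$ into $\mathfrak{R}_N(\mathcal{F}_B)\leq B\sqrt{\Tr(\mK)}/N$ gives $\Tr(\mK)\sqrt{\vy^\intercal\mK^{-1}\vy}/N^{3/2}$, which matches the theorem's numerator $\sqrt{\Tr(\mK)\,\vy^\intercal\mK^{-1}\vy}/N$ only under the normalization $\Tr(\mK)=N$. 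The paper imposes that normalization explicitly; it is a required hypothesis in the simplification, not a cancellation you obtain for free from the data-dependent radius.
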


To prove the above, we apply a helper theorem and lemma copied below.
\begin{theorem}[Concentration of sum; see Theorem 3.1 in \cite{mohri2018foundations}]
\label{thm:concentration_theorem}
Let $G$ be a family of functions mapping from $Z$ to $[0,1]$. Given $N$ independent samples $z_1, \dots, z_n$ from $Z$, then for any $\delta > 0$, with probability at least $1-\delta$, the following holds:
\begin{equation}
    \mathbb{E}_z[g(z)] - \frac{1}{N} \sum_{i=1}^N g(z_i) \leq 2 \widehat{\mathfrak{R}}_S (G) + 3 \sqrt{ \frac{\log(2/\delta)}{2N} },
\end{equation}
where $\widehat{\mathfrak{R}}_S (G)$ denotes the empirical Rademacher complexity of $G$ equal to
\begin{equation}
    \widehat{\mathfrak{R}}_S (G) = \mathbb{E}_\sigma \left[ \sup_{g \in G} \frac{1}{N} \sum_{i=1}^N \sigma_i g(z_i) \right],
\end{equation}
where $\sigma_i$ are independent uniform random variables over $\{-1,+1\}$.
\end{theorem}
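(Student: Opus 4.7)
The plan is to apply the standard Rademacher complexity argument, which combines McDiarmid's bounded differences inequality with a symmetrization trick. The high-level strategy is to first show that the quantity we wish to bound concentrates around its expectation, then upper-bound that expectation in terms of the empirical Rademacher complexity, and finally show that $\widehat{\mathfrak{R}}_S(G)$ itself concentrates around its expectation $\mathfrak{R}_N(G)$.

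First, I would define $\Phi(S) = \sup_{g \in G}\bigl(\mathbb{E}_z[g(z)] - \tfrac{1}{N}\sum_{i=1}^N g(z_i)\bigr)$ on the sample $S = (z_1, \dots, z_N)$. Since every $g$ takes values in $[0,1]$, replacing any single $z_i$ by some $z_i'$ changes each empirical mean by at most $1/N$, and hence changes $\Phi(S)$ by at most $1/N$. McDiarmid's inequality then yields that with probability at least $1 - \delta/2$,
\begin{equation*}
\Phi(S) \;\le\; \mathbb{E}_S[\Phi(S)] + \sqrt{\frac{\log(2/\delta)}{2N}}.
\end{equation*}

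Second, I would bound $\mathbb{E}_S[\Phi(S)]$ by symmetrization. Introduce an independent ghost sample $S' = (z_1', \dots, z_N')$ distributed identically to $S$, so that $\mathbb{E}_z[g(z)] = \mathbb{E}_{S'}[\tfrac{1}{N}\sum_i g(z_i')]$. Pulling the supremum outside the expectation with Jensen and inserting i.i.d.\ Rademacher variables $\sigma_i \in \{-1,+1\}$ (whose insertion is valid because $g(z_i') - g(z_i)$ is symmetric under swapping $z_i \leftrightarrow z_i'$) gives
\begin{equation*}
\mathbb{E}_S[\Phi(S)] \;\le\; \mathbb{E}_{S,S',\sigma}\sup_{g \in G}\frac{1}{N}\sum_{i=1}^N \sigma_i\bigl(g(z_i') - g(z_i)\bigr) \;\le\; 2\,\mathfrak{R}_N(G),
\end{equation*}
where $\mathfrak{R}_N(G) = \mathbb{E}_S[\widehat{\mathfrak{R}}_S(G)]$ is the Rademacher complexity.

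Third, I would relate $\mathfrak{R}_N(G)$ to $\widehat{\mathfrak{R}}_S(G)$ by applying McDiarmid a second time: the bounded difference for $\widehat{\mathfrak{R}}_S(G)$ as a function of the sample is again $1/N$, so with probability at least $1 - \delta/2$, $\mathfrak{R}_N(G) \le \widehat{\mathfrak{R}}_S(G) + \sqrt{\log(2/\delta)/(2N)}$. A union bound combines both high-probability events, and rearranging gives the stated bound with the factor of $3$ appearing from $2 + 1$ in front of the $\sqrt{\log(2/\delta)/(2N)}$ term. The main technical step is the symmetrization argument; the rest is a routine application of bounded-differences concentration. Since this is essentially a direct restatement of Theorem 3.1 of Mohri et al., the proof in our setting is merely a matter of tracking the constants carefully through these three ingredients.
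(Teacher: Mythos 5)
The paper does not prove this theorem itself: it is stated as a helper result and imported verbatim from Theorem~3.1 of \citet{mohri2018foundations}. Your proof sketch is precisely the argument used in that reference — McDiarmid applied to $\Phi(S)$, symmetrization with a ghost sample and Rademacher variables to bound $\mathbb{E}_S[\Phi(S)] \le 2\mathfrak{R}_N(G)$, a second application of McDiarmid to pass from $\mathfrak{R}_N(G)$ to $\widehat{\mathfrak{R}}_S(G)$, and a union bound giving the constant $3 = 2\cdot 1 + 1$ in front of $\sqrt{\log(2/\delta)/(2N)}$. The bounded-difference constants ($1/N$ in both applications, valid because $g$ maps into $[0,1]$) and the split of $\delta$ into two halves are tracked correctly, so the argument is complete and matches the cited source.
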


\begin{lemma}[Talagrand's lemma; see Lemma 4.2 in \cite{mohri2018foundations}]
\label{lem:Talagrand}
Let $\Phi:\mathbb{R} \to \mathbb{R}$ be $L$-Lipschitz. Then for any hypothesis set $G$ of real-valued functions,
\begin{equation}
    \widehat{\mathfrak{R}}_S( \Phi \circ G) \leq L \widehat{\mathfrak{R}}_S( G).
\end{equation}
\end{lemma}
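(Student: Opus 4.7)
The plan is to prove Talagrand's contraction lemma by the classical peeling argument that processes one Rademacher variable at a time, invoking $L$-Lipschitzness only on the coordinate currently being peeled. I would fix a sample realization $S = (z_1,\ldots,z_N)$ and, working inside the outer Rademacher expectation, condition on all $\sigma_j$ with $j \neq i$ for a chosen index $i$.

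Setting $u_i(g) := \sum_{j \neq i}\sigma_j \Phi(g(z_j))$, expanding the conditional expectation over $\sigma_i \in \{-1,+1\}$ uniformly and combining the two resulting suprema gives
\[
\mathbb{E}_{\sigma_i}\sup_{g \in G}\bigl[u_i(g) + \sigma_i \Phi(g(z_i))\bigr] = \tfrac{1}{2}\sup_{g,g' \in G}\bigl[u_i(g) + u_i(g') + \Phi(g(z_i)) - \Phi(g'(z_i))\bigr].
\]
The crucial step is the Lipschitz swap: by $L$-Lipschitzness, $\Phi(g(z_i)) - \Phi(g'(z_i)) \leq L|g(z_i) - g'(z_i)|$. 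Because the bracket is symmetric under swapping $g \leftrightarrow g'$, the supremum is attained at a pair with $g(z_i) \geq g'(z_i)$, so the absolute value may be dropped without loss. Reversing the peeling identity above with $Lg$ in place of $\Phi\circ g$ yields
\[
\tfrac{1}{2}\sup_{g,g' \in G}\bigl[u_i(g) + u_i(g') + L\bigl(g(z_i) - g'(z_i)\bigr)\bigr] = \mathbb{E}_{\sigma_i}\sup_{g \in G}\bigl[u_i(g) + L\sigma_i g(z_i)\bigr].
\]

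I would then iterate this replacement for $i = 1,\ldots,N$ in turn. Each step preserves the form ``function of $g$ independent of $\sigma_i$'' plus ``$\sigma_i$ times a scalar function of $g(z_i)$'', so the peeling identity reapplies verbatim even after some coordinates have already been converted from $\Phi(g(z_j))$ to $Lg(z_j)$. After all $N$ indices are processed, the argument of the supremum becomes $\sum_j L\sigma_j g(z_j)$; pulling out $L$, taking expectation over $S$, and dividing by $N$ yields $\widehat{\mathfrak{R}}_S(\Phi \circ G) \leq L\,\widehat{\mathfrak{R}}_S(G)$.

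The main subtlety, and the only place where any real thought is needed, is the symmetrization that lets one drop the absolute value after applying the Lipschitz bound: it relies crucially on both suprema ranging over the same class $G$, so that relabeling $g \leftrightarrow g'$ leaves the expression invariant. A cleaner bookkeeping alternative would be to formalize the iteration as an induction on the number of already-converted indices, which removes any ambiguity about the form of the ``other coordinates'' during intermediate steps and makes the preservation of the peeling structure explicit.
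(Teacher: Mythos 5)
Your proof is correct and is exactly the standard coordinate-by-coordinate peeling argument from Lemma 4.2 of \cite{mohri2018foundations}, which the paper simply cites without reproving. The only cosmetic point is that the supremum need not be attained over an infinite class $G$, so the symmetrization step that drops the absolute value should be phrased via $\epsilon$-approximate maximizers (or by observing that for every pair $(g,g')$ one of the two orderings of the pair realizes $|g(z_i)-g'(z_i)|$ as $g(z_i)-g'(z_i)$ while leaving the symmetric part unchanged); this does not affect the validity of the argument.
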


Now, we are ready to prove \Cref{thm:main_generalization_bound}.
\begin{proof}
We consider a function class $G_\gamma$ defined as the set of linear functions on the reproducing kernel Hilbert space such that $G_\gamma = \{\langle \vw, \cdot \rangle: \|\vw\|^2 \leq \gamma \} $. Given a dataset of inputs $\vx_1,\dots,\vx_N$ in the reproducing kernel Hilbert space with corresponding targets $y_1, \dots, y_N$, let $\epsilon_{\vw}(\vx_i) = \ell(\langle \vw, \vx \rangle, y_i) \in [0,b]$. The inequality in \Cref{thm:concentration_theorem} applies for any given $G_{\gamma}$ but we would like this to hold for all $\gamma \in \{1,2,3,\dots\}$ since $\|\vw\|$ can be unbounded. Since our loss function $\ell$ is bounded between $[0,b]$, we multiply it by $1/b$ so that it is ranged in $[0,1]$ as needed for \Cref{thm:concentration_theorem}. Then, we apply \Cref{thm:concentration_theorem} to $\epsilon_{\vw}(\vx_i)$ over the class $G_\gamma$ for each $\gamma \in \{1,2,3,\dots\}$ with probability $\delta_\gamma = \delta (e-1) e^{-\gamma}$. This implies that 
\begin{equation}
    \mathbb{E}_{\vx}[\epsilon_{\vw}(\vx)] - \frac{1}{N} \sum_{i=1}^N \epsilon_{\vw}(\vx_i) \leq 2 \mathbb{E}_\sigma \left[ \sup_{\|\vv\|^2 \leq \gamma } \frac{1}{N} \sum_{i=1}^N \sigma_i \epsilon_{\vv}(\vx_i) \right] + 3b \sqrt{ \frac{\log(2/(\delta (e-1)))+\gamma}{2N} }.
\end{equation}
This shows that for any $\gamma$, the above inequality holds with probability $1-\delta (e-1) e^{-\gamma}$. However, we need to show this holds for all $\gamma$ simultaneously. To achieve this, we apply a union bound which holds with probability $1-\sum_{\gamma=1}^\infty \delta_\gamma = 1-\sum_{\gamma=1}^\infty \delta (e-1) e^{-\gamma} = 1-\delta$.

To proceed, we consider the inequality where $\gamma = \ceil{\|\vw\|^2}$ copied below.
\begin{equation}
    \mathbb{E}_{\vx}[\epsilon_{\vw}(\vx)] - \frac{1}{N} \sum_{i=1}^N \epsilon_{\vw}(\vx_i) \leq 2 \mathbb{E}_\sigma \left[ \sup_{\|\vv\|^2 \leq \ceil{\|\vw\|^2} } \frac{1}{N} \sum_{i=1}^N \sigma_i \epsilon_{\vv}(\vx_i) \right] + 3b \sqrt{ \frac{\log(2/(\delta (e-1)))+\ceil{\|\vw\|^2}}{2N} }.
\end{equation}
Applying Talagrand's lemma (\Cref{lem:Talagrand}) followed by the Cauchy-Schwarz inequality,
\begin{equation}
    \begin{split}
            \mathbb{E}_{\vx}[\epsilon_{\vw}(\vx)] - \frac{1}{N} \sum_{i=1}^N \epsilon_{\vw}(\vx_i) &\leq 2\mathbb{E}_\sigma \left[ \sup_{\|\vv\|^2 \leq \ceil{\|\vw\|^2} } \frac{1}{N} \sum_{i=1}^N \sigma_i \epsilon_{\vv}(\vx_i) \right] + 3b \sqrt{ \frac{\log(2/(\delta (e-1)))+\ceil{\|\vw\|^2}}{2N} } \\
            &\leq 2L\mathbb{E}_\sigma \left[ \sup_{\|\vv\|^2 \leq \ceil{\|\vw\|^2} } \frac{1}{N} \sum_{i=1}^N \sigma_i \langle \vv, \vx_i \rangle \right] + 3b \sqrt{ \frac{\log(2/(\delta (e-1)))+\ceil{\|\vw\|^2}}{2N} } \\
            &\leq 2L\mathbb{E}_\sigma \left[ \sup_{\|\vv\|^2 \leq \ceil{\|\vw\|^2} } \|\vv\| \left\| \frac{1}{N} \sum_{i=1}^N \sigma_i \vx_i \right\| \right] + 3b \sqrt{ \frac{\log(2/(\delta (e-1)))+\ceil{\|\vw\|^2}}{2N} } \\
            &\leq 2L\mathbb{E}_\sigma \left[ \ceil{\|\vw\|} \left\| \frac{1}{N} \sum_{i=1}^N \sigma_i \vx_i \right\| \right] + 3b \sqrt{ \frac{\log(2/(\delta (e-1)))+\ceil{\|\vw\|^2}}{2N} }.
    \end{split}
\end{equation}
Expanding out the quantity $\left\| \frac{1}{N} \sum_{i=1}^N \sigma_i \vx_i \right\|$ and noting that the random variables are independent, we have
\begin{equation}
    \begin{split}
            \mathbb{E}_{\vx}[\epsilon_{\vw}(\vx)] - \frac{1}{N} \sum_{i=1}^N \epsilon_{\vw}(\vx_i) &\leq 2L\mathbb{E}_\sigma \left[ \ceil{\|\vw\|} \left\| \frac{1}{N} \sum_{i=1}^N \sigma_i \vx_i \right\| \right] + 3b \sqrt{ \frac{\log(2/(\delta (e-1)))+\ceil{\|\vw\|^2}}{2N} } \\
            & = \frac{2L}{N} \ceil{\|\vw\|} \mathbb{E}_\sigma \left[ \sqrt{ \sum_{i=1}^N \sum_{j=1}^N \sigma_i \sigma_j k(\vx_i, \vx_j) } \right] + 3b \sqrt{ \frac{\log(2/(\delta (e-1)))+\ceil{\|\vw\|^2}}{2N} } \\
            & \leq \frac{2L}{N} \ceil{\|\vw\|} \sqrt{ \mathbb{E}_\sigma \left[ \sum_{i=1}^N \sum_{j=1}^N \sigma_i \sigma_j k(\vx_i, \vx_j)  \right]} + 3b \sqrt{ \frac{\log(2/(\delta (e-1)))+\ceil{\|\vw\|^2}}{2N} } \\
            & = \frac{2L}{N} \sqrt{\sum_{i=1}^N  k(\vx_i, \vx_i)}  + 3b \sqrt{ \frac{\log(2/(\delta (e-1)))+\ceil{\|\vw\|^2}}{2N} } \\
            &= \frac{2L}{N} \ceil{\|\vw\|} \sqrt{\operatorname{Tr}(\mK)} + 3b \sqrt{ \frac{\log(2/(\delta (e-1)))+\ceil{\|\vw\|^2}}{2N} }.
    \end{split}
\end{equation}
Since we normalize such that $\operatorname{Tr}(\mK) = N$, we have
\begin{equation}
    \begin{split}
            \mathbb{E}_{\vx}[\epsilon_{\vw}(\vx)] - \frac{1}{N} \sum_{i=1}^N \epsilon_{\vw}(\vx_i) &\leq \frac{2L}{\sqrt{N}} \ceil{\|\vw\|}  + 3b \sqrt{ \frac{\log(2/(\delta (e-1)))+\ceil{\|\vw\|^2}}{2N} } \\
            &\leq \frac{2L}{\sqrt{N}} \ceil{\|\vw\|}  + 3b \sqrt{ \frac{\log(2/(\delta (e-1)))}{2N}} + 3b\frac{\ceil{\|\vw\|}}{\sqrt{2N}} \\
            &= \frac{2\sqrt{2}L + 3b}{\sqrt{2N}} \ceil{\|\vw\|}  + 3b \sqrt{ \frac{\log(2/(\delta (e-1)))}{2N}}.
    \end{split}
\end{equation}
Plugging in $\|\vw\|^2 = \vy^\intercal \mK^{-1} \vy$ and noting that we normalized the kernel to have $\Tr(\mK) = N$ thus completes the proof.

\end{proof}

\section{Additional numerical experiments}
\label{app:additional_experiments}
\subsection{Visualizing the contrastive induced kernel on the spiral dataset}
\label{appsec:contrastive_spirals}

\begin{figure}[htpb]
    \centering
    \includegraphics[width=\textwidth]{images/rbf_kernelssl_spirals_4x.png}
    \caption{Inner products comparison in the RBF kernel space (first row) and induced kernel space (second row). The induced kernel is computed based on Equation~\ref{eq:induced_kernel_contrastive} and the graph adjacency matrix $\mA$ is derived from the inner product neighborhood in the RBF kernel space, i.e., using the neighborhoods as data augmentation. We plot three randomly chosen points' kernel entries with respect to the other points on the manifolds. a) When the neighborhood augmentation range used to construct the adjacency matrix is small enough, the SSL-induced kernel faithfully learns the topology of the entangled spiral manifolds. b) When the neighborhood augmentation range used to construct the adjacency matrix is too large, it creates the ``short-circuit'' effect in the induced kernel space. Each subplot on the second row is normalized by dividing its largest absolute value for better contrast.}
    \label{fig:spiral_contrastive}
\end{figure}

Here, we provide the contrastive SSL-induced kernel visualization in Figure~\ref{fig:spiral_contrastive} to show how the SSL-induced kernel is helping to manipulate the distance and disentangle manifolds in the representation space. In Figure~\ref{fig:spiral_contrastive}, we study two entangled 1-D spiral manifolds in a 2D space with 200 training points uniformly distributed on the spiral manifolds. We use the contrastive SSL-induced kernel, following Equation~\ref{eq:induced_kernel_contrastive}, to demonstrate this result, whereas the non-contrastive SSL-induced kernel is provided earlier in the main text. We use the radial basis function (RBF) kernel to calculate the inner products between different points and plot a few points' RBF neighborhoods in the first row of Figure~\ref{fig:spiral}, i.e., $k(x_1,x_2) = \text{exp}(-\frac{\|x_1-x_2\|^2}{2\sigma^2})$. As we can see, the RBF kernel captures the locality of the 2D space. Next, we use the RBF kernel space neighborhoods to construct the adjacency matrix $\mA$ and any training points with $k(x_1,x_2)>d$ are treated as connected vertices, i.e., $\mA_{ij}=1$ and $\mA_{ij}=0$ otherwise. The diagonal entries of $A$ are $0$. This construction can be considered as using the Euclidean neighborhoods of $x$ as the data augmentation. In the second row of Figure~\ref{fig:spiral_contrastive}, we show the selected points' inner products with the other training points in the SSL-induced kernel space. Given $\sigma$, when $d$ is chosen small enough, we can see in the second row of Figure~\ref{fig:spiral_contrastive}(a) that the SSL-induced kernel faithfully captures the topology of manifolds and leads to a more disentangled representation. Figure~\ref{fig:spiral_contrastive}(b) shows that when $d$ is too large, i.e., an improper data augmentation, the SSL-induced kernel leads to the ``short-circuit'' effect, and the two manifolds are mixed in the representation space.

\subsection{MNIST with neural tangent kernels}
\label{app:NTK_experiments}

\begin{table}[]
    \centering
    \small
\begin{tabular}{lllrrrrrrr}
\toprule
                            &     & {} & \multicolumn{7}{l}{Test Set Accuracy} \\
                            &     & Num. Augmentations &              16   & 32   & 64   & 128  & 256  & 512  & 1024 \\
Augmentation & Samples & Algorithm &                   &      &      &      &      &      &      \\
\midrule
Gaussian blur & 16  & self-supervised &              32.3 & 32.3 & 32.6 & 29.1 & 32.8 & 32.3 & 32.5 \\
                            &     & baseline (no augment) &               21.8 & 21.8 & 21.8 & 21.8 & 21.8 & 21.8 & 21.8 \\
                            &     & baseline (augmented) &              34.8 & 34.9 & 35.0 & 35.0 & 35.0 & 35.0 & 35.0 \\
                            & 64  & self-supervised &              71.8 & 71.8 & 71.6 & 71.4 & 72.1 &   &   \\
                            &     & baseline (no augment) &              65.0 & 65.0 & 65.0 & 65.0 & 65.0 &   &   \\
                            &     & baseline (augmented) &              74.6 & 74.6 & 74.7 & 74.7 & 74.7 &   &   \\
                            & 256 & self-supervised &              87.5 & 88.0 & 88.0  &   &   &   &   \\
                            &     & baseline (no augment) &              86.5 & 86.5 & 86.5  &   &   &   &   \\
                            &     & baseline (augmented) &              88.6 & 88.6 & 88.6  &   &   &   &   \\
\multirow{2}{*}{\begin{tabular}[c]{@{}l@{}}translate, zoom\\ rotate\end{tabular}} & 16  & self-supervised &              34.2 & 35.2 & 36.0 & 36.8 & 37.3 & 37.5 & 38.0 \\
                            &     & baseline (no augment) &               21.8 & 21.8 & 21.8 & 21.8 & 21.8 & 21.8 & 21.8 \\
                            &     & baseline (augmented) &              37.5 & 38.5 & 39.3 & 40.2 & 41.1 & 41.5 & 41.7 \\
                            & 64  & self-supervised &              73.4 & 74.8 & 75.3 & 75.9 & 76.8 &   &   \\
                            &     & baseline (no augment) &              65.0 & 65.0 & 65.0 & 65.0 & 65.0 &   &   \\
                            &     & baseline (augmented) &              77.7 & 78.8 & 79.4 & 79.7 & 80.0 &   &   \\
                            & 256 & self-supervised &              90.4 & 91.0 & 91.5  &   &   &   &   \\
                            &     & baseline (no augment) &              86.5 & 86.5 & 86.5  &   &   &   &   \\
                            &     & baseline (augmented) &              90.4 & 90.8 & 91.1  &   &   &   &   \\
\bottomrule
\end{tabular}    
\caption{\textbf{NTK for 3-layer fully connected network:} Test set accuracy of SVM using the neural tangent kernel of a 3 layer fully connected network in only a supervised (baseline) setting or via the induced kernel in a self-supervised setting. The induced kernel for the SSL algorithm is calculated using the contrastive induced kernel. Numbers shown above are the test set accuracy for classifying MNIST digits for small dataset sizes with the given number of samples. Due to the quadratic scaling of memory and runtime for kernel methods, we restricted analysis to more feasible settings where there were less than 25,000 total samples (number of augmentations times number of samples). }
    \label{tab:NTK_fully_connected}
\end{table}

\begin{table}[]
    \centering
    \small
    \begin{tabular}{lllrrrrrrr}
\toprule
                            &     & {} & \multicolumn{7}{l}{Test Set Accuracy} \\
                            &     & Num. Augmentations &              16   & 32   & 64   & 128  & 256  & 512  & 1024 \\
Augmentation & Samples & Algorithm &                   &      &      &      &      &      &      \\
\midrule
Gaussian blur & 16 & self-supervised &              28.4 & 28.4 & 28.4 & 28.4 & 28.3 & 28.3 & 28.4 \\
                            &    & baseline (no augment) &              25.6 & 25.6 & 25.6 & 25.6 & 25.6 & 25.6 & 25.6 \\
                            &    & baseline (augmented) &              26.6 & 26.6 & 26.6 & 26.6 & 26.6 & 26.6 & 26.6 \\
                            & 64 & self-supervised &              60.0 & 60.0 & 60.1 & 60.1 & 60.1 &   &   \\
                            &    & baseline (no augment) &              55.6 & 55.6 & 55.6 & 55.6 & 55.6 &   &   \\
                            &    & baseline (augmented) &              56.5 & 56.5 & 56.5 & 56.5 & 56.5 &   &   \\
                            & 256 & self-supervised &              87.6 & 87.9 & 87.6 &   &   &   &   \\
                            &     & baseline (no augment) &              81.5 & 81.5 & 81.5  &   &   &   &   \\
                            &     & baseline (augmented) &              82.9 & 82.9 & 82.9  &   &   &   &   \\
\multirow{2}{*}{\begin{tabular}[c]{@{}l@{}}translate, zoom\\ rotate\end{tabular}} & 16 & self-supervised &              33.4 & 33.8 & 34.4 & 34.6 & 35.1 & 35.4 & 35.3 \\
                            &    & baseline (no augment) &              25.6 & 25.6 & 25.6 & 25.6 & 25.6 & 25.6 & 25.6 \\
                            &    & baseline (augmented) &              30.6 & 30.7 & 30.8 & 31.6 & 31.7 & 31.9 & 32.0 \\
                            & 64 & self-supervised &              70.7 & 72.7 & 73.2 & 73.8 & 74.3 &   &   \\
                            &    & baseline (no augment) &              55.6 & 55.6 & 55.6 & 55.6 & 55.6 &   &   \\
                            &    & baseline (augmented) &              66.0 & 67.8 & 68.8 & 69.5 & 70.2 &   &   \\
                            & 256 & self-supervised &              91.9 & 92.3 & 92.6  &   &   &   &   \\
                            &     & baseline (no augment) &              81.5 & 81.5 & 81.5  &   &   &   &   \\
                            &     & baseline (augmented) &              89.5 & 90.0 & 90.3  &   &   &   &   \\
\bottomrule
\end{tabular}
    \caption{\textbf{NTK for 7-layer convolutional neural network:} Test set accuracy of SVM using the neural tangent kernel of a CNN with 7 layers of $3 \times 3$ convolution followed by a global pooling layer. The induced kernel for the SSL algorithm is calculated using the contrastive induced kernel and compared to the standard SVM using the kernel itself with or without augmentation. Numbers shown above are the test set accuracy for classifying MNIST digits for small dataset sizes with the given number of samples. Due to the quadratic scaling of memory and runtime for kernel methods, we restricted analysis to more feasible settings where there were less than 25,000 total samples (number of augmentations times number of samples). }
    \label{tab:NTK_cnn}
\end{table}

In further exploring the performance of SSL kernel methods on small datasets, we perform further numerical experiments on the MNIST dataset using kernel functions derived from the neural tangent kernel (NTK) of commonly used neural networks \citep{jacot2018neural}. We use the neural-tangents package to explicitly calculate the NTK for a given architecture \citep{neuraltangents2020}. The basic setup is repeated from \Cref{sec:experiments_mnist} where two different types of augmentations are performed on images. As before, we consider augmentations by Gaussian blurring of the pixels or small translations, rotations, and zooms. 

Test set accuracy results are shown in \Cref{tab:NTK_fully_connected} for the NTK associated to a 3-layer fully connected network with infinite width at each hidden layer. Similarly, in \Cref{tab:NTK_cnn}, we show a similar analysis for the NTK of a CNN with seven convolutional layers followed by a global pooling layer. We use the contrastive induced kernel for the SSL kernel algorithm. The findings are similar to those reported in the main text. The SSL method performs similarly to the baseline method without any self-supervised learning but including labeled augmented datapoints in the training set. In some cases, the SSL method even outperforms the baseline augmented setting.

\subsection{Analysis of downstream solution}
\label{sec:downstream_analysis}

\begin{figure}[t!]
    \centering
    \begin{minipage}{0.49\linewidth}
    \includegraphics[width=\linewidth]{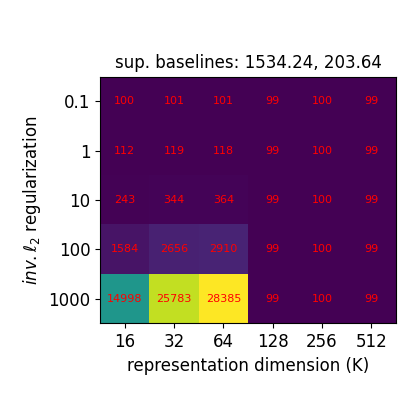}
    \end{minipage}
    \begin{minipage}{0.49\linewidth}
    \caption{Depiction of $s_N(\mK)$ (\Cref{eq:n_s}) for the case of MNIST classification as depicted in the left figure in \Cref{fig:ablation} computed on the training set. This setting considers MNIST classification with the contrastive kernel on a $10000$-sample dataset with $100$ original MNIST samples and $100$ augmentations. $s_N(\mK)$ for the supervised baselines for the full dataset (including labeled augmented samples) and only the original dataset (no augmented samples) are calculated as approximately 1534 and 203 respectively. Many values of $s_N(\mK)$ are smaller than the baseline numbers especially at points where the test set accuracy for the SSL induced kernel was comparitively larger. Here, we recover the trend of the test set performances obtained from \Cref{fig:ablation} showing that $s_N(\mK)$ is potentially a good indicator of test accuracy.}
    \label{fig:bound}
    \end{minipage}
\end{figure}

To empirically analyze generalization, we track the complexity quantity $s_N(\mK)$ here defined in \Cref{eq:n_s} and copied below:
\begin{equation}
    s_N(\mK)= \frac{\operatorname{Tr}(\mK)}{N} \vy^\intercal \mK^{-1} \vy,
\end{equation}
where $\vy$ is a vector of targets and $\mK$ is the kernel matrix of the supervised dataset. As a reminder, the generalization gap can be bounded with high probability by $O(\sqrt{s_N(\mK)/N})$, and in the main text, we provided heuristic indication that this quantity may be reduced when using the induced kernel. Here, we empirically analyze whether this holds true in the MNIST setting considered in the main text. As shown in \Cref{fig:bound}, the SSL induced kernel produces smaller values of $s_N(\mK)$ than its corresponding supervised counterpart. We calculate this figure by splitting the classes into binary parts (even and odd integers) in order to construct a vector $\vy$ that mimics a binary classification task. Comparing this to the test accuracy results reported in \Cref{fig:ablation}, it is clear that $s_N(\mK)$ also correlates inversely with test accuracy as expected. The results here lend further evidence to the hypothesis that the induced kernel better correlates points along a data manifold as outlined in \Cref{prop:ideal_SSL}.

\end{document}